%%%%%%%% ICML 2022 EXAMPLE LATEX SUBMISSION FILE %%%%%%%%%%%%%%%%%

\documentclass[nohyperref]{article}

% Recommended, but optional, packages for figures and better typesetting:
\usepackage{microtype}
\usepackage{graphicx}
\usepackage{subfigure}
\usepackage{booktabs} % for professional tables

% hyperref makes hyperlinks in the resulting PDF.
% If your build breaks (sometimes temporarily if a hyperlink spans a page)
% please comment out the following usepackage line and replace
% \usepackage{icml2022} with \usepackage[nohyperref]{icml2022} above.
\usepackage{hyperref}

% Attempt to make hyperref and algorithmic work together better:

% Use the following line for the initial blind version submitted for review:
% \usepackage{icml2022}

% If accepted, instead use the following line for the camera-ready submission:
\usepackage[accepted]{icml2022}

% For theorems and such
\usepackage{amsmath}
\usepackage{amssymb}
\usepackage{mathtools}
\usepackage{amsthm}

% if you use cleveref..
\usepackage[capitalize,noabbrev]{cleveref}

%%%%%%%%%%%%%%%%%%%%%%%%%%%%%%%%
% THEOREMS
%%%%%%%%%%%%%%%%%%%%%%%%%%%%%%%%
\theoremstyle{plain}
\newtheorem{theorem}{Theorem}[section]

\theoremstyle{definition}

\theoremstyle{remark}

% Todonotes is useful during development; simply uncomment the next line
%    and comment out the line below the next line to turn off comments
%\usepackage[disable,textsize=tiny]{todonotes}
\usepackage[textsize=tiny]{todonotes}

%%%%%%%%%%%%%%%%%%%% Béni %%%%%%%%%%%%%%%%%%%%%%%%%%%%%%%%%%%
\usepackage{float}
\usepackage[shortlabels]{enumitem}

\usepackage{tikz}
\usetikzlibrary{positioning}% To get more advances positioning options
\usetikzlibrary{arrows}% To get more arrow heads
\usetikzlibrary{arrows.meta}
\usepackage{pgfplots}
\pgfplotsset{compat=1.16}
\usepackage{xcolor}
\definecolor{myGray}{rgb}{0.65,0.65,0.65}

\DeclarePairedDelimiter\abs{\lvert}{\rvert}
\DeclarePairedDelimiter\norm{\lVert}{\rVert}

%%%%%%%%%%%%%%%%%%%% END %%%%%%%%%%%%%%%%%%%%%%%%%%%%%%%%%%%

% The \icmltitle you define below is probably too long as a header.
% Therefore, a short form for the running title is supplied here:
\icmltitlerunning{GNNs with Precomputed Node Features}

\begin{document}

\twocolumn[
\icmltitle{Graph Neural Networks with Precomputed Node Features}

% It is OKAY to include author information, even for blind
% submissions: the style file will automatically remove it for you
% unless you've provided the [accepted] option to the icml2022
% package.

% List of affiliations: The first argument should be a (short)
% identifier you will use later to specify author affiliations
% Academic affiliations should list Department, University, City, Region, Country
% Industry affiliations should list Company, City, Region, Country

% You can specify symbols, otherwise they are numbered in order.
% Ideally, you should not use this facility. Affiliations will be numbered
% in order of appearance and this is the preferred way.
\icmlsetsymbol{equal}{*}

\begin{icmlauthorlist}
\icmlauthor{Beni Egressy}{yyy}
\icmlauthor{Roger Wattenhofer}{yyy}
%\icmlauthor{}{sch}
%\icmlauthor{}{sch}
\end{icmlauthorlist}

\icmlaffiliation{yyy}{ETH Zurich, Zurich, Switzerland}
% \icmlaffiliation{yyy}{Department of XXX, University of YYY, Location, Country}
% \icmlaffiliation{comp}{Company Name, Location, Country}
% \icmlaffiliation{sch}{School of ZZZ, Institute of WWW, Location, Country}

\icmlcorrespondingauthor{Beni Egressy}{begressy@ethz.ch}
% \icmlcorrespondingauthor{Roger Wattenhofer}{wattenhofer@ethz.ch}

% You may provide any keywords that you
% find helpful for describing your paper; these are used to populate
% the "keywords" metadata in the PDF but will not be shown in the document
% \icmlkeywords{Machine Learning, ICML}

\vskip 0.3in
]

\begin{abstract}
Most Graph Neural Networks (GNNs) cannot distinguish some graphs or indeed some pairs of nodes within a graph. This makes it impossible to solve certain classification tasks. However, adding additional node features to these models can resolve this problem.
We introduce several such augmentations, 
including (i) positional node embeddings, (ii) canonical node IDs, and (iii) random features. 
These extensions are motivated by theoretical results and corroborated by extensive testing on synthetic subgraph detection tasks.
% where standard GNNs would fail. 
% A particularly promising extension are positional embeddings, which outperform baselines in terms of performance and learning efficiency across different graph distributions
We find that positional embeddings significantly outperform other extensions in these tasks.
Moreover, positional embeddings have better sample efficiency, perform well on different graph distributions and even outperform learning with ground truth node positions. 
% On the other hand we see that their performance drops as we increase the average degree in the graph distributions.
Finally, we show that the different augmentations perform competitively on established GNN benchmarks, and advise on when to use them.
\end{abstract}

\section{Introduction}

% Node embeddings capture similarity between nodes in a multi-dimensional space: the closer two nodes are embedded, the more similar they are in the network. Two broad categories of node similarity are prevalent in the representation learning literature: (i) proximity, which intuitively embeds similarly nodes that belong to communities or cohesive groups [42, 48]; and (ii) equivalence or structural similarity, which aims to similarly embed nodes that have similar patterns of relations with other nodes irrespective of their exact location in the graph [45, 50].
% TAKEN from: https://dl.acm.org/doi/pdf/10.1145/3481639

Given the impressive success of neural networks in the text and image domains, recently research has also turned its attention to graph-structured data. In just a few years there has been explosive interest in the area, with Graph Neural Networks (GNNs) achieving state-of-the-art results in a wide range of applications, including molecule recognition, physics simulations, recommendation systems, fake news detection and social networks \cite{fout2017protein, sanchez2020physics, ying2018recommender_systems}.

Most GNN architectures are based on the message passing framework, which can be summarized in three main steps: (1) node representations are initialized with their initial features (if available) or node degrees (if not available); (2) nodes update their representations by aggregating the representations of neighboring nodes; (3) the final representations of nodes are combined in a readout layer to solve the task at hand.\looseness=-1

Although this framework has been a recipe for success, it has also been observed that such GNNs are limited in their power to distinguish even very simple graphs, e.g., see Figure~\ref{fig:WL_example_labels} (top, ignoring the labels). This has been formalized by proving that message-passing GNNs are upper-bounded by the Weisfeiler-Lehman (WL) isomorphism test \cite{weisfeiler1968WL_test, xu2018powerful_gin, morris2019weisfeiler_WL}. We shall refer to such GNNs as Weisfeiler-Lehman GNNs, or WLGNNs.\footnote{We use the term WLGNNs as used in \cite{li2020distance_encoding}} 
More precisely, WLGNNs produce identical representations for (sub-)graphs that the WL test fails to distinguish. 
In particular these include regular graphs, where nodes cannot be distinguished based on their degree and the degree of the nodes around them.

\begin{figure}
\centering
\begin{tikzpicture}[scale=0.3, every node/.style={scale=1.0}]
  \begin{scope}
    \node[circle, draw, minimum size=0.3cm] (A) at  (0,0) {};
    % \node (Av) at  (0,0) {$v$};
    \node[circle, draw, minimum size=0.3cm] (B) at  (0,4) {};
    \node[circle, draw, minimum size=0.3cm, fill=gray] (C) at  (3,2) {};
    % \node (Cv) at  (3,2) {$v$};
    \node[circle, draw, minimum size=0.3cm, fill=gray] (D) at  (6,2) {};
    \node[circle, draw, minimum size=0.3cm] (E) at  (9,0) {};
    \node[circle, draw, minimum size=0.3cm] (F) at  (9,4) {};
    \draw [semithick,-] (A) -- (B);
    \draw [semithick,-] (A) -- (C);
    \draw [semithick,-] (B) -- (C);
    \draw [semithick,-] (C) -- (D);
    \draw [semithick,-] (D) -- (E);
    \draw [semithick,-] (D) -- (F);
    \draw [semithick,-] (E) -- (F);
    \node[rectangle, left = -0.0cm of A] {\footnotesize $2$};
    \node[rectangle, left = -0.0cm of B] {\footnotesize $3$};
    \node[rectangle, above = -0.0cm of C] {\footnotesize $0$};
    \node[rectangle, above = -0.0cm of D] {\footnotesize $1$};
    \node[rectangle, right = -0.0cm of E] {\footnotesize $4$};
    \node[rectangle, right = -0.0cm of F] {\footnotesize $5$};
    
    \node[circle, draw, minimum size=0.3cm] (A2) at  (14,0) {};
    % \node (A2v) at  (14,0) {$v$};
    \node[circle, draw, minimum size=0.3cm] (B2) at  (14,4) {};
    \node[circle, draw, minimum size=0.3cm, fill=gray] (C2) at  (18,0) {};
    % \node (C2v) at  (18,0) {$v$};
    \node[circle, draw, minimum size=0.3cm, fill=gray] (D2) at  (18,4) {};
    \node[circle, draw, minimum size=0.3cm] (E2) at  (22,0) {};
    \node[circle, draw, minimum size=0.3cm] (F2) at  (22,4) {};
    \draw [semithick,-] (A2) -- (B2);
    \draw [semithick,-] (A2) -- (C2);
    \draw [semithick,-] (B2) -- (D2);
    \draw [semithick,-] (C2) -- (D2);
    \draw [semithick,-] (C2) -- (E2);
    \draw [semithick,-] (D2) -- (F2);
    \draw [semithick,-] (E2) -- (F2);
    \node[rectangle, left = -0.0cm of A2] {\footnotesize $2$};
    \node[rectangle, left = -0.0cm of B2] {\footnotesize $3$};
    \node[rectangle, above right = -0.1cm of C2] {\footnotesize $0$};
    \node[rectangle, above right = -0.1cm of D2] {\footnotesize $1$};
    \node[rectangle, right = -0.0cm of E2] {\footnotesize $4$};
    \node[rectangle, right = -0.0cm of F2] {\footnotesize $5$};
  \end{scope}
  
  \begin{scope}[yshift=-7.5cm, xshift=4.8cm]
    % \node[circle, draw, minimum size=0.3cm] (A) at  (-4,-2.25) {};
    % \node[circle, draw, minimum size=0.3cm] (B) at  (-1.25,-3.6) {};
    % \node[circle, draw, minimum size=0.3cm, fill=gray] (C) at  (-1,-1) {};
    % \node[circle, draw, minimum size=0.3cm, fill=gray] (D) at  (1,1) {};
    % \node[circle, draw, minimum size=0.3cm] (E) at  (4,2.25) {};
    % \node[circle, draw, minimum size=0.3cm] (F) at  (1.25,3.6) {};
    \node[circle, draw, minimum size=0.3cm] (A) at  (-4.8,-2.7) {};
    \node[circle, draw, minimum size=0.3cm] (B) at  (-1.5,-4.3) {};
    % \node (Bv) at  (-1.5,-4.3) {$v$};
    \node[circle, draw, minimum size=0.3cm, fill=gray] (C) at  (-1.2,-1.2) {};
    \node[circle, draw, minimum size=0.3cm, fill=gray] (D) at  (1.2,1.2) {};
    \node[circle, draw, minimum size=0.3cm] (E) at  (4.8,2.7) {};
    \node[circle, draw, minimum size=0.3cm] (F) at  (1.5,4.3) {};
    \draw [semithick,-] (A) -- (B);
    \draw [semithick,-] (A) -- (C);
    \draw [semithick,-] (B) -- (C);
    \draw [semithick,-] (C) -- (D);
    \draw [semithick,-] (D) -- (E);
    \draw [semithick,-] (D) -- (F);
    \draw [semithick,-] (E) -- (F);
    % \node[rectangle, left = -0.0cm of A] {\footnotesize $(-1.61, -0.89)$};
    % \node[rectangle, left = -0.0cm of B] {\footnotesize $(-0.5, -1.43)$};
    \node[rectangle, right = -0.1cm of C] {\footnotesize $(-0.3, -0.35)$};
    \node[rectangle, left = -0.1cm of D] {\footnotesize $(0.31, 0.36)$};
    % \node[rectangle, right = -0.0cm of E] {\footnotesize $(1.61, 0.88)$};
    % \node[rectangle, right = -0.0cm of F] {\footnotesize $(0.52, 1.43)$};
    
    \node[circle, draw, minimum size=0.3cm] (A2) at  (9.7,0.2) {};
    \node[circle, draw, minimum size=0.3cm] (B2) at  (11.3,-4.0) {};
    % \node (B2v) at  (11.3,-4.5) {$v$};
    \node[circle, draw, minimum size=0.3cm, fill=gray] (C2) at  (13.1,2.1) {};
    \node[circle, draw, minimum size=0.3cm, fill=gray] (D2) at  (14.9,-2.1) {};
    \node[circle, draw, minimum size=0.3cm] (E2) at  (16.7,3.9) {};
    \node[circle, draw, minimum size=0.3cm] (F2) at  (18.3,-0.2) {};
    \draw [semithick,-] (A2) -- (B2);
    \draw [semithick,-] (A2) -- (C2);
    \draw [semithick,-] (B2) -- (D2);
    \draw [semithick,-] (C2) -- (D2);
    \draw [semithick,-] (C2) -- (E2);
    \draw [semithick,-] (D2) -- (F2);
    \draw [semithick,-] (E2) -- (F2);
    % \node[rectangle, left = -0.0cm of A2] {\footnotesize $(-0.92, -1.51)$};
    % \node[rectangle, left = -0.0cm of B2] {\footnotesize $(-1.44, 0.12)$};
    \node[rectangle, above left = -0.2cm of C2] {\footnotesize $(-0.2, 0.45)$};
    \node[rectangle] at (16.9,-3.3) {\footnotesize $(0.16, -0.5)$};
    % \node[rectangle, right = -0.0cm of E2] {\footnotesize $(0.93, 1.5)$};
    % \node[rectangle, right = -0.0cm of F2] {\footnotesize $(1.44, -0.13)$};
  \end{scope}
\end{tikzpicture}
% \vspace{-0.2cm}
\caption{Example of limitation of WLGNNs, which cannot distinguish the graphs on left and right without node features. Both our augmentations, canonical ID (above) and positional embedding (below) resolve this issue. For example the two nodes with canonical ID $0$ (top) now have different $1$-hop neighborhoods (${1,2,3}$ vs ${1,2,4}$) and can be distinguished by $2$ rounds of message passing. For the positional embeddings, we only show the coordinates of select nodes, but the positions correspond to the embeddings.}
\label{fig:WL_example_labels}
\end{figure}

As a consequence there have been various suggestions to overcome the expressive limit of WLGNNs. These include: breaking symmetries by making perturbations to the input graph; using higher-order representations of the input graph; and initializing the nodes with 
% structural information or random features.
additional features.

The latter approach will be the focus of this paper. 
We explore the power of GNNs augmented with precomputed node features. Since initializing node representations is the first main step of most standard GNNs, this approach can be combined with many different architectures.

% We propose two \emph{universal} augmentations: canonical node IDs, and $L_{\infty}$ embeddings. We call our augmentations universal because we prove that GNNs augmented with these features can distinguish any two non-isomorphic graphs. We also propose more practically motivated positional embeddings. For positional embeddings, we initialize each node with a position in a $d$-dimensional Euclidean space. The positions are calculated by a force-directed algorithm that aims to match the pairwise distances in the euclidean plane to the shortest path distances in the input graph. Finally, we propose a simple random bit string as an alternative to the universal random node features proposed by \cite{sato2021random} and further analyzed in \cite{abboud2020surprising}.

Our main contributions are:
\begin{itemize}
\itemsep0em 
    \item We propose several new augmentations for standard GNN architectures to improve their expressive power beyond the WL test: canonical node IDs, $l_{\infty}$ embeddings, positional embeddings, random bits, and combinations of the above. See Figure \ref{fig:WL_example_labels}. 
    \item For several augmentations, we formally prove that they make appropriate GNN architectures universal.
    \item We conduct extensive subgraph detection tests on carefully constructed benchmark datasets to show the empirical validity of these augmentations. 
    \item We show that the augmentations perform competitively on established benchmarks.
\end{itemize}

% GNNs message passing

% only as strong as WL 

% many efforts to make more expressive [...]

% angles can also be seen as precomputed node features

% random
% exponential
% same for isomorphic graph/permutations but different for non-isomorphic

% \section{Related Work}

% Positional Embedding algorithms KK.

\section{Preliminaries}

\subsection{Notation}

We consider unweighted, undirected, connected graphs $G=(V,E)$ consisting of $n=|V|$ nodes. We denote an edge between nodes $u$ and $v$ by $e_{uv}$. We denote the neighborhood of node $v$ by $N(v) = \{u \in G \mid e_{uv} \in E\}$, the degree of node $v$ by $\deg(v) = |N(v)|$, and the maximum degree of all nodes by $\Delta = \max_{v \in V} \deg(v)$. The graph diameter $D$ is the length of the longest shortest path between any two nodes.\looseness=-1
% In the self-loop graph G∗ = (V, E∗), the neighborhood set of $v$ is given by $N∗(v) = N(v) \cup v$.

\subsection{GNNs}

Most GNN architectures are based on the message passing framework, which consists of: (1) node initialization, (2) node updates via message passing, and (3) a readout step. 
% Most GNN architectures are based on the message passing framework. Message passing is conducted in rounds.
The message passing itself is conducted in rounds. In each round there are three steps. First, every node creates a \textsc{message} based on its current embedding. It then sends this message to each of its neighbors. Second, when the nodes have received messages from each of their neighbors, they \textsc{aggregate} these messages. Finally they \textsc{update} their state by combining their current state with the aggregate embedding they have just calculated. One round of message passing corresponds to one layer of a GNN. Usually $k$ layers are stacked on top of each other so that after $k$ rounds of message passing each node will have received information, either directly or indirectly, from all of the nodes in its $k$-hop neighborhood.   
Usually, \textsc{message}, \textsc{aggregate}, \textsc{update} and \textsc{readout} are functions with learnable parameters, and they are shared among all the nodes. In the simple case, where the messages are the current state of the node, the GNN can be summarized as follows:
\begin{align*}
    h_{v}^{(0)} &= x_{v} \\
    a_{v}^{(t)} &= \textsc{aggregate} ( \{ h_{u}^{(t-1)} \mid u \in N(v) \} ) \\
    h_{v}^{(t)} &= \textsc{update} ( h_{v}^{(t-1)}, a_{v}^{(t)} ) \\
    y_{v} &= \textsc{readout} (h_{v}^{(t)})
\end{align*}
where $x_{v}$ are the initial node features and $h_{v}^{(t)}$ is the embedding vector, or representation, of node $v$ after $t$ rounds of message passing. The readout layer above is a function of a single node in the case of node classification, but could also be a function of all the nodes (graph classification), or a subset of the nodes (link prediction). 

Since reordering the nodes of a graph does not change the underlying topology, in general we want to learn a graph or node function that is invariant to permutations of the nodes. This is commonly ensured by treating incoming messages as a set (or multiset) and learning a (permutation-invariant) set aggregation function for \textsc{aggregate}.

% There are many variants of message passing; for example the messages between nodes might be based on the sender and receiver embeddings, or also on edge features. In this paper we will focus on the simple setting described above, where a node sends the same message to all of its neighbors.

\subsection{Weisfeiler-Lehman Test}

The message passing framework is closely related to the Weisfeiler-Lehman (WL) isomorphism test. The WL test is an iterative color refinement procedure \cite{weisfeiler1968WL_test, shervashidze2011weisfeilerWL}. Each node keeps a state (or color) that gets refined in each iteration by aggregating information from the neighbors’ states and combining it with its own. 

More precisely, in each iteration we assign to each node a tuple containing the node’s state and a multiset of the node’s neighbors' states. Then we hash these tuples to give each node its new state. In this way all nodes with different tuples receive different new states and all nodes with the same tuple receive the same new state. In particular two nodes with different states at time $T \geq 0$ will never have the same state after time $T$; hence we call this a \textit{refinement} algorithm. All nodes are assigned the same initial state, so for example, after the first iteration the state of a node corresponds to its degree. To check whether two graphs are isomorphic we can run this procedure on both graphs until convergence and then compare the multisets of the final node states. If they are different, then the graphs are not isomorphic. If they are the same, then we cannot be sure. 

If the WL test cannot distinguish two graphs, then a standard message-passing GNN cannot distinguish them either: intuitively, the nodes in the two graphs receive the same messages and create the same embeddings in each round, and thus they always arrive at the same final result. As such, a message-passing algorithm can only be as powerful\footnote{as good at distinguishing non-isomorphic graphs} as the WL algorithm. 
% And in particular a $d$-layer GNN can only be as powerful as $d$ iterations of the WL algorithm.
In fact for some GNNs, it can be shown that they are exactly as powerful as the WL isomorphism test. This is the case with GIN \cite{xu2018powerful_gin}, which we shall rely on in this paper.

The WL test can be extended to start with an initial labelling of the nodes. If the nodes have initial features, then these can be hashed to give the initial states before continuing the algorithm. Similarly, initial features can be used to increase the power of a GNN. For example, in chemistry problems, the nodes might represent atoms of a molecule, and so the type of atom would be an initial feature for each node \cite{irwin2012zinc, yanardag2015real_datasets}.

\subsection{Distributed Computing Models}

The GNN message passing framework is also closely related to message passing models in distributed computing\footnote{The connection between GNNs and distributed computing models was first noted in \cite{loukas2019graph_node_ids_proof}.}. 

As GNNs, distributed computing deals with a network of nodes, connected by message passing edges, and the task is to calculate some function of the graph. As GNNs, all nodes also execute the same algorithm; however, the algorithm is designed rather than learned.
%However the nodes are now equipped with unique node identifiers.
%In this model the nodes of the graph are assumed to have unlimited computational capacity and the theoretical analysis is rather concerned with the number of synchronous communication rounds required to complete the task, than with the number of computation rounds.
%In LOCAL, there are no restrictions on the message sizes, whereas in the CONGEST(B), model single messages can only contain at most B bits [XX Peleg 2000].

% The number of rounds is usually measured relative to the diameter of the network. Indeed in this model any task can be solved in D rounds of communication. 

One major difference to GNNs is that in distributed computing, the nodes usually have unique IDs.\footnote{There are exceptions, in particular anonymous networks \cite{seidel}.} This ensures that a node can always distinguish its neighbors and indeed all nodes in the graph. If computation and communication is unbounded, any problem can be solved in $D$ rounds of message passing, where $D$ is the diameter of the graph. Each node can simply encode all the information it has about the graph so far into a message and send this information to all of its neighbors in every round. With the help of the unique IDs, after $D$ rounds, every node will know the exact topology of the graph. %As such, interesting questions in the LOCAL model are rather whether a problem can be solved in less than $\mathcal{O}(D)$ rounds.

% \section{Precomputed Node Features}
\section{Motivation}

% There have been many proposals for increasing the expressive power of GNNs. These include port numbers on edges \cite{sato2019ports_CPNGNN}, random features \cite{sato2021random}, randomly dropping nodes \cite{papp2021dropgnn}, adding subgraph counts \cite{bouritsas2020subgraph_counts}, or mimicking higher order WL tests \cite{maron2019ppgn, morris2019weisfeiler_WL}. But many of these still fail to distinguish simple graphs or are prohibitively expensive. 
Upon considering the aforementioned distributed computing model, one natural way to increase the power of GNNs is to treat the node indices as node IDs and use the one-hot encoding of node indices as an input feature. This way the GNN can distinguish all the nodes, and it can theoretically distinguish all non-isomorphic graphs, making it ``universal''. However, the embedding will depend heavily on the initial ordering of the vertices, and there are $n!$ possible permutations. As such the GNN loses its inductive advantage of mapping nodes with identical neighborhoods to the same embedding, resulting in a great loss in generalization ability. 
% In other words, the resulting model is no longer permutation invariant/equivariant. 
Similarly, one could use random features as input node features \cite{abboud2020surprising, sato2021random}. This can also be proven to give ``universal'' GNNs, but it also leads to the same generalization problems. 
Basically,  we want to give each node a label such that:\looseness=-1
\begin{enumerate}[(a)]
    \item the probability to have the same label(s) is high if the graph is the same, and \label{R1}
    \item the probability to have the same label(s) is low if the graph is not the same. \label{R2}
\end{enumerate}

% Additionally we might want:
% \begin{enumerate}[(c)]
%     \item ``similar'' nodes to receive similar labels, \label{R3}
% \end{enumerate}
% but it is not clear what we mean by ``similar'', or indeed how to achieve this.

Using no labels (or identical labels), as in the WL algorithm, perfectly achieves \ref{R1}, but does not do well regarding \ref{R2}, since many graphs remain indistinguishable. On the other hand, using node indices or random features as labels may achieve ``universality'' \ref{R2}, but does less well regarding \ref{R1}. The problem is that the additional information is not permutation invariant. As a consequence, exponentially more training examples might be needed. 
The additional information can also confuse the learning procedure, and the GNN may weigh the additional information too heavily in the decision process.

We hypothesise that the right balance of \ref{R1} and \ref{R2} is at the core of a deeper understanding of GNNs. By keeping the additional information low, we get a net benefit from adding it. 
We explore this hypothesis by testing different augmentations aimed at balancing \ref{R1} and \ref{R2}. 

First, we propose adding a vector of random bits. By controlling the length of the vector we have a fine grained control over \ref{R1} and \ref{R2}. 
Second, we propose precomputed \emph{canonical node IDs}; these do not burden the learner with an exponential input diversity, while at the same time offering benefits regarding \ref{R2}.
% And in an effort to fulfil \ref{R3} 

% We are particularly fond of 
The most impressive results are obtained by 
our novel precomputed \emph{positional node embeddings}. Like random features, these geometric labels also fulfill \ref{R2} to a sufficient degree. %Indeed, low dimensional positional node embeddings work surprisingly well, as GNNs seem to develop an understanding of their purpose, being able to look beyond the bare coordinates. 
While positional embeddings add even more information than node IDs, in practice GNNs can interpret positional coordinates well. GNNs seem to develop an understanding of these coordinates (e.g., the distance between two positions), without being distracted by the erratic raw values of the positions.

\subsection{Canonical node IDs}

Canonical node IDs are node IDs that are standardized such that all the nodes in a graph always receive the same IDs however the nodes are permuted at the input. 
% Until recently it was an open problem whether this can be done in sub-exponential time and there are still no polynomial time algorithms.
It is not clear how we can find canonical node IDs fast in practise. Indeed, an algorithm that produces canonical node IDs also solves the graph isomorphism problem, and until recently it was not known whether the graph isomorphism problem can be solved in sub-exponential time. However, Babai et al. showed that solving graph isomorphism and finding a canonical form can be done in quasipolynomial time \cite{helfgott2017graphisomorphism, babai2019canonical_graph}. However, their approach is mostly of academic interest, since in practice the overhead is too high. Instead, \textsc{nauty} \cite{mckay2014nauty} and \textsc{bliss} \cite{junttila2007bliss} are used in practice. These algorithms do not guarantee a sub-exponential runtime, but are fast in practice. 
% See \cite{mckay2014nauty} for runtime calculations. 
We will use \textsc{nauty} when calculating canonical orderings in this paper. See Figure \ref{fig:WL_example_labels} for example graphs with canonical IDs.

First we show that GNNs with canonical node IDs can theoretically distinguish any two non-isomorphic graphs. The proof follows from Corollary 3.1 in \cite{loukas2019graph_node_ids_proof}. As such we leave the proof for the appendix.

\begin{theorem}
\label{thm:GNN_canon}
GIN of sufficient depth augmented with canonical node IDs can distinguish any two non-isomorphic graphs.\looseness=-1
\end{theorem}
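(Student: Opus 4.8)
The plan is to reduce the statement to the known universality of message-passing GNNs equipped with unique node identifiers --- as established in Corollary 3.1 of \cite{loukas2019graph_node_ids_proof} --- and then to observe that (i) GIN is expressive enough to realise the required GNN, (ii) canonical IDs are in particular a valid choice of pairwise-distinct node features within each graph, and (iii) canonicity makes the GIN output a well-defined function of the graph that is injective on isomorphism classes. Concretely, to encode IDs as fixed-width one-hot vectors I would first fix a bound $N$ on the sizes of the graphs under consideration (given two specific graphs one can just take $N=\max(|V(G)|,|V(H)|)$) and represent the canonical label $c_G\colon V(G)\to\{1,\dots,|V(G)|\}$ of each node as a one-hot vector in $\mathbb{R}^{N}$, so that within any graph all node features are distinct. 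The defining property of a canonical labelling is that $G\cong H$ if and only if the labelled graphs $(G,c_G)$ and $(H,c_H)$ are literally equal; equivalently, the pair consisting of $|V(G)|$ and the adjacency matrix written in canonical order is a complete invariant of the isomorphism class.

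Next I would argue that a sufficiently deep GIN computes this complete invariant. Recall from \cite{xu2018powerful_gin} that the MLPs inside GIN can implement injective functions on multisets drawn from a countable domain, so a single GIN layer can be made to send the pair $\bigl(h_v^{(t-1)},\{\!\{h_u^{(t-1)}:u\in N(v)\}\!\}\bigr)$ to a new state injectively. Starting from the pairwise-distinct one-hot IDs and iterating this ``remember everything'' update, the state $h_v^{(t)}$ then determines the labelled ball of radius $t$ around $v$; after $t\ge N-1\ge D$ rounds every node's state determines the entire labelled graph $(G,c_G)$. A final injective \textsc{readout} over the (pairwise distinct) node states --- realisable again by GIN's layer MLP, since a sum of suitably chosen distinct vectors is injective --- outputs a representation that determines $(G,c_G)$. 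This is exactly the distributed ``broadcast your whole neighbourhood each round'' procedure sketched in the Preliminaries, now run with canonical rather than arbitrary IDs; invoking Corollary 3.1 of \cite{loukas2019graph_node_ids_proof} lets me skip the explicit MLP construction and simply assert that such a GNN, and in particular a GIN of suitable width and depth, exists.

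Finally, since the output determines $(G,c_G)$ and $(G,c_G)=(H,c_H)$ if and only if $G\cong H$, two non-isomorphic graphs receive different GIN outputs, which is the claim; moreover canonicity guarantees the converse, so the output is a genuine (permutation-invariant) graph invariant and ``distinguishes'' is unambiguous. The main point to be careful about --- and the reason I would lean on the cited corollary instead of doing it by hand --- is verifying that finite-precision MLPs actually realise the injective multiset aggregations over the growing-but-finite domains that arise during the $\Theta(N)$ rounds, and that the a priori size bound $N$ needed to fix the one-hot dimension is handled cleanly (e.g.\ stating the result for graphs on at most $N$ nodes, or pairwise); everything else is bookkeeping.
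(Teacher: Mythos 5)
Your proposal is correct and takes essentially the same route as the paper: both reduce the statement to Corollary~3.1 of \cite{loukas2019graph_node_ids_proof} by checking that GIN of sufficient width and depth meets the corollary's conditions and that canonical IDs supply the required unique node identifiers. The extra material you include (the explicit ``broadcast everything'' construction, the one-hot dimension bound $N$, and the remark that canonicity makes the output a genuine graph invariant) is a more detailed elaboration of the same argument rather than a different approach.
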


In our empirical tests, we use the one-hot encoding of the canonical node IDs as initial node features. Note that using canonical node IDs is very similar to using the node indices as features. But crucially there are no longer $n!$ possible inputs for the same graph. Instead a graph will always receive the same labelling. We expect this to lead to better learning and sample efficiency. 

\subsection{Positional node Embeddings}

There have been recent attempts to improve GNNs by learning positional embeddings, see for example \cite{ma2021gat_with_pos, klemmer2021positional}. However, these are based on solving some auxiliary task while training the main network, with the former using a combined neural network, and the latter focusing on geographic data. 

% Why might precomputed spatial node embeddings be better than learned embeddings? 
Using precomputed positional embeddings has several advantages over learned embeddings.
Firstly, the node embedding algorithms we use take the whole graph topology into account. 
% This allows it to differentiate nodes that have the same local neighborhoods. 
One could also use fully connected message passing to achieve this, but message passing GNNs have a known problem with oversmoothing and fully connected layers only exaggerate this problem.  
% GNNs with precomputed node features do not exacerbate this issue, as nodes still only aggregate messages from their neighbors. 
Secondly, the positional embedding algorithms we use can guarantee unique node positions, leading to universal GNNs.
Finally, precomputed node features are also very versatile; the node features can be used to augment any standard GNN.

\subsubsection*{$l_{\infty}$-embeddings}

We propose embedding the input graph into $l_{\infty}$ and using the positions as node features. We begin by showing that any graph can be embedded into $l_{\infty}$ isometrically (i.e., such that all shortest path distances are preserved). 
This guarantees that no information about the graph is lost.
% If we have an embedding that preserves all the distances, then no information about the graph is lost. 
% In other words, one can represent a graph as a set of points in $l_{\infty}$.
One could throw away the graph topology and the set of embeddings would still hold all the information.

\begin{theorem}
\label{thm:l_inf}
Every graph $G=(V,E)$ embeds isometrically into $l_{\infty}$.
\end{theorem}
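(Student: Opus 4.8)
The plan is to use the classical Fréchet-type embedding into $\ell_\infty$. Since the graphs under consideration are connected, the shortest-path distance $d(u,v)$ between any two nodes is finite, so $(V,d)$ is a genuine finite metric space. I would fix an arbitrary enumeration $V = \{v_1, \dots, v_n\}$ of the $n$ nodes and define the map $f \colon V \to \mathbb{R}^n$ (equipped with the $\ell_\infty$ norm) by
\[
    f(v) = \bigl(d(v, v_1),\, d(v, v_2),\, \dots,\, d(v, v_n)\bigr).
\]
In other words, each node is represented by the vector of its distances to all nodes.

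Next I would verify that $f$ is an isometry, i.e., $\norm{f(u) - f(w)}_\infty = d(u,w)$ for all $u, w \in V$. The upper bound follows coordinatewise from the triangle inequality: for every index $i$ we have $\abs{d(u, v_i) - d(w, v_i)} \le d(u, w)$, hence the maximum over $i$ is at most $d(u,w)$. For the matching lower bound, I would evaluate the coordinate corresponding to $v_i = w$ (which exists, since $w$ is one of the enumerated nodes): there $\abs{d(u, w) - d(w, w)} = d(u,w)$, so the $\ell_\infty$ norm is at least $d(u,w)$. Combining the two bounds gives equality, which is exactly the claim.

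I do not expect a genuine obstacle here; the statement is a standard fact and the argument above is essentially complete. The only points worth a sentence of care are: (i) invoking connectedness to ensure $d$ is well defined and finite, so that $f$ maps into $\mathbb{R}^n$ rather than an extended-reals space; and (ii) noting that the construction yields an embedding into $n$-dimensional $\ell_\infty$, which is all that is needed, even though it is not the smallest possible dimension. If a cleaner dimension bound were desired one could afterwards prune redundant coordinates, but that is not required for the theorem as stated.
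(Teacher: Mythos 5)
Your proof is correct and matches the paper's argument exactly: both use the Fr\'echet embedding $f(v) = (d(v,v_1),\dots,d(v,v_n))$, with the triangle inequality giving the upper bound and the coordinate indexed by one of the two endpoints giving the matching lower bound. No differences worth noting.
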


See the appendix for a proof. Unfortunately the embedding in the proof uses $n$ dimensions, and adding node features of length $n$ is impractical for large graphs. It can be shown that $\Theta(n)$ is in fact a lower bound for the dimension of isometric embeddings into $l_{\infty}$ \cite{coppersmith2001infinity}. For other $l_{p}$ spaces, isometric embeddings are in general not even possible.
% , in any dimension.
% Consider for example the star graph with $3$ edges.

We use the isometric $l_{\infty}$-embeddings from the proof in our synthetic tasks. These graphs are small enough for this approach to be practical. Moreover, we can prove that GNNs with $l_{\infty}$-embeddings are universal. The proof follows the same approach as Theorem \ref{thm:GNN_canon}, see appendix.

\begin{theorem}
\label{thm:GNN_l_inf_universal}
GIN of sufficient depth augmented with isometric $l_{\infty}$-embeddings can distinguish any two non-isomorphic graphs.
\end{theorem}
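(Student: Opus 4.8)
The plan is to mirror the proof of Theorem~\ref{thm:GNN_canon} and reduce to Corollary~3.1 of \cite{loukas2019graph_node_ids_proof}. The only property of canonical node IDs that that argument uses is that, within a single graph, they assign \emph{distinct} identifiers to distinct nodes --- canonicity across permutations is not needed for the distinguishing claim, only uniqueness. So the first step is to check that the isometric $l_\infty$-embedding constructed in the proof of Theorem~\ref{thm:l_inf} has this uniqueness property. That embedding sends a node $v$ to its vector of shortest-path distances $\phi(v) = (d(v,u_1),\dots,d(v,u_n))$ for some fixed enumeration $u_1,\dots,u_n$ of $V$; since $\|\phi(u)-\phi(v)\|_\infty = d(u,v)\ge 1$ whenever $u\ne v$ in a connected graph, $\phi$ is injective, i.e.\ the $l_\infty$-embedding is a valid assignment of unique node identifiers. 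A convenient side observation is that these identifiers take values in the finite set $\{0,1,\dots,D\}^n$, so the injective-multiset-aggregation construction underpinning GIN's expressive power \cite{xu2018powerful_gin} applies without modification.

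Given unique identifiers, the second step is the standard distributed-computing-style reconstruction argument: with depth at least the diameter $D$ and sufficiently wide layers, message passing lets each node accumulate round by round the identifiers and adjacencies it has seen, so that after $D$ rounds every node holds an encoding of the whole graph together with its $\phi$-labels; GIN can realize the requisite injective \textsc{aggregate} and \textsc{update} maps, which is exactly where Corollary~3.1 of \cite{loukas2019graph_node_ids_proof} enters. The last step is the readout: every node outputs the same reconstructed $\phi$-labeled graph, so a graph-level readout that strips the labels in a canonical way --- e.g.\ by composing with a canonical form of the underlying unlabeled graph --- produces an output depending only on the isomorphism class of $G$, hence separating non-isomorphic graphs.

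The hard part, and the one genuine difference from Theorem~\ref{thm:GNN_canon}, is precisely this last point: the $l_\infty$-coordinates are \emph{not} permutation invariant, so one cannot read a graph invariant off them directly. The fix is to push the canonicalization into the (arbitrary, learnable) readout rather than the node features, since the reconstructed labeled graph already contains all the information; alternatively, if one only wants separation of a fixed pair $G_1\not\cong G_2$ rather than a single universal network, observe that their reconstructed labeled graphs are then non-isomorphic even as unlabeled graphs and choose readout parameters separating those two specific reconstructions. Either way the expressive content is supplied entirely by the uniqueness of the $l_\infty$-labels together with Corollary~3.1, so apart from this bookkeeping the argument is routine, and a full write-up is deferred to the appendix.
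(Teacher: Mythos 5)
Your proof takes essentially the same route as the paper's: reduce to Corollary~3.1 of \cite{loukas2019graph_node_ids_proof} by checking that the isometric $l_{\infty}$-embedding assigns distinct labels to distinct nodes, which follows from $\| f(v_i) - f(v_j) \|_{\infty} = d(v_i, v_j) \geq 1$ for $v_i \neq v_j$ in a connected graph. Your additional care about the readout --- canonicalizing away the permutation-dependent $l_{\infty}$ coordinates so the output depends only on the isomorphism class --- addresses a point the paper's proof glosses over (it simply notes that a Turing-complete GNN can solve graph isomorphism), but it is an elaboration of, not a departure from, the same argument.
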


% Corollary:

% Proof: ?? 

Although these embeddings do not lose any information, they have a major drawback: The embeddings are large and their dimension depends on the number of nodes. If we want our model to accommodate
% train or test 
graphs of up to $n_{\text{max}}$ nodes, then we need to use node labels of length $n_{\text{max}}$. This is not only inconvenient, but also makes the learning problem much harder. As an alternative we introduce approximate Euclidean embeddings.

\subsubsection*{Positional Embeddings}

To calculate our positional embeddings we use a force-directed technique based on an approach proposed in \cite{kamada1989algorithm}. The main idea is to obtain a layout of the graph minimizing the following \textsc{stress} function:

$$ \textsc{stress}(P) = \sum_{i<j} w_{ij} \left( ||p(v_i) - p(v_j)||_{2} - d_{ij} \right)^2 , $$

where $p(v_i)$ is the coordinate position of vertex $v_i$ in the Euclidean vector space; $d_{ij}$ is the shortest path distance between nodes $v_i$ and $v_j$; and $w_{ij}$ is a weighting factor for balancing the influence of certain pairs of nodes. We use $w_{ij} = d_{ij}^{-2}$. The localized Newton-Raphson method is used to optimize the stress function. This is an iterative solver, where in each step a single node is chosen and the standard N-R method is used to find a local minimum of the stress function. See Figure \ref{fig:WL_example_labels} for example graphs with labels.

One can think of this approach as placing an ideal spring between every pair of nodes with length equal to the shortest path distance between the endpoints. The springs are then used to push/pull the nodes so that their euclidean distance
% in the layout
approximates their shortest path distance in the graph. 

These algorithms are randomized so they do not produce the same embeddings when run multiple times\footnote{We run the embedding algorithm a single time for each input graph, and thereafter keep these embeddings fixed during training.}. However, the relative positions between nodes are always optimized. 
Intuitively, this is something that should help with generalization to unseen graphs. Regardless of the exact placement of nodes, relative distances should be preserved and unlike random features or canonical node labels, these encode structural information about the nodes.
% , that ``similar'' nodes receive similar labels.
% This means they do not fulfil there is no canonical 
% We do not make efforts to standardize the embeddings, for example by finding a canonical final orientation of the node positions, as there are too many sources of randomness in the algorithms for this to make sense.

% Theory: nodes cannot have same pos embedding (with some probability?) - any guarantees here?

\begin{figure}[H]
\centering
\begin{tikzpicture}[scale=0.3, every node/.style={scale=1.0}]
    \node[circle, draw, minimum size=0.3cm] (A) at  (0.4,0) {};
    \node[circle, draw, minimum size=0.3cm] (B) at  (3,5) {};
    \node[circle, draw, minimum size=0.3cm] (O) at  (3,2) {};
    \node[circle, draw, minimum size=0.3cm] (C) at  (5.6,0) {};
    \draw [semithick,-] (A) -- (O);
    \draw [semithick,-] (B) -- (O);
    \draw [semithick,-] (C) -- (O);
    
    \node[circle, draw, minimum size=0.3cm] (A2) at  (14,0) {};
    \node[circle, draw, minimum size=0.3cm] (B2) at  (14,4) {};
    \node[circle, draw, minimum size=0.3cm] (C2) at  (18,4) {};
    \node[circle, draw, minimum size=0.3cm] (D2) at  (18,0) {};
    \draw [semithick,-] (A2) -- (B2);
    \draw [semithick,-] (B2) -- (C2);
    \draw [semithick,-] (C2) -- (D2);
    \draw [semithick,-] (D2) -- (A2);
\end{tikzpicture}
\caption{Example graphs that cannot be embedded into Euclidean space (of any dimension).}
\label{fig:non_eucl_embed_example}
\end{figure}
% \vspace{-0.3cm}

% \begin{figure}[H]
% \centering
% \begin{tikzpicture}[scale=0.3, every node/.style={scale=1.0}]
%     \node[circle, draw, minimum size=0.3cm] (A) at  (0,0) {};
%     \node at (0,0) {$a$};
%     \node[circle, draw, minimum size=0.3cm] (B) at  (0,4) {};
%     \node at (0,4) {$b$};
%     \node[circle, draw, minimum size=0.3cm] (C) at  (4,4) {};
%     \node at (4,4) {$c$};
%     \node[circle, draw, minimum size=0.3cm] (D) at  (4,0) {};
%     \node at (4,0) {$d$};
%     \draw [semithick,-] (A) -- (B);
%     \draw [semithick,-] (B) -- (C);
%     \draw [semithick,-] (C) -- (D);
%     \draw [semithick,-] (D) -- (A);
    
%     \node[circle, draw, minimum size=0.3cm] (A2) at  (10,1) {};
%     \node at (10,1) {$a$};
%     \node[circle, draw, minimum size=0.3cm] (B2) at  (14,1) {};
%     \node at (14,1) {$b$};
%     \node[circle, draw, minimum size=0.3cm] (C2) at  (18,1) {};
%     \node at (18,1) {$c$};
%     \node[circle, draw, minimum size=0.3cm] (D2) at  (14,2.1) {};
%     \node at (14,2.1) {$d$};
%     \draw [semithick,-] (A2) -- (B2);
%     \draw [semithick,-] (B2) -- (C2);
%     % \draw [semithick,-] (C2) -- (D2);
%     % \draw [semithick,-] (D2) -- (A2);
% \end{tikzpicture}
% \caption{Example of an optimal embedding of a $4$-cycle into $1$-dimensional Euclidean space.}
% \label{fig:same_eucl_embed_example}
% \end{figure}

In general, unlike with $l_{\infty}$, graphs cannot always be isometrically embedded into Euclidean space, see Figure \ref{fig:non_eucl_embed_example}. Moreover there can be equilibrium points, where different nodes receive the same position.
% , see Figure \ref{fig:same_eucl_embed_example} for an example in $1$ dimension. 
This means that we cannot directly apply the same universality result that we used in Theorem \ref{thm:GNN_canon}.
% and the associated augmented GNNs are no longer universal function approximators. 
% Of course i
However this is highly unlikely to occur in multiple dimensions. 
% to occur.
% , and even finding a specific example is difficult.
Moreover, there are several ways to guarantee unique vertex positions. One can adapt the algorithm to avoid collisions explicitly \cite{bostock2011d3} or one can apply post-processing methods to remove node overlaps \cite{huang2007non-overlap, marriott2003removing-overlap}. 
Taking modifications into account, one can again theoretically guarantee that the associated augmented GNNs are universal function approximators. 
We did not apply such modifications in our experiments as we had no overlaps.

\subsection{Random Baselines}

Using random node features to initialize GNNs was first explored in \cite{sato2021random}, and was shown to give universal function approximators in \cite{abboud2020surprising}. In the latter, uniform and normal distributions, and in the former discrete uniform distributions are used to initialize the nodes. We initialize our random features in three different ways:\looseness=-1

\textbf{Random Gaussian:} 
In this model, we add a vector of $d$ random Gaussian samples to each node. The samples are generated i.i.d., from a $\mathcal{N}(0, 1)$ distribution. This is similar to \cite{abboud2020surprising}. We denote this as \emph{rNormal($d$)}.

\textbf{Random Uniform Integer:}
In this model, we assign a random integer from $0$ to $99$ to each node; denoted \emph{rUniform}.

\textbf{Randombits:}
In this model, we add a vector of $d$ random bits to each node. The random bits are generated i.i.d., from a Bernouilli(0.5) distribution. We denote this as \emph{rBits($d$)}.

Note that random features can be viewed as a form of training data augmentation. We generate the features freshly each time an input graph is presented to the model. As such the model sees many versions of the same training graph. This could be an advantage over the precalculated canonical and positional node features. 

\subsection{Our Model}

First we precalculate the additional node features for a given input graph, be they canonical node IDs, positional node embeddings, or random features. We then concatenate the precalculated node features with the provided node features (if available) and use these as input for our GNN. We mainly use the GIN architecture for consistency \cite{xu2018powerful_gin}. We choose GIN as it can be shown to be theoretically as powerful as the WL test.

\section{Experiments}

\subsection{Model Parameters}

Unless otherwise stated, we use GIN as our standard WLGNN. We use $5$ layers by default with hidden dimension $64$. We use weighted cross-entropy loss, Adam optimizer with initial learning rate 0.01 and decay of rate $0.5$ after every $50$ epochs. We use dropout on the final layer and we train for $300$ epochs by default with a batch size of $32$.

\subsection{Datasets beyond WL}

\begin{table*}
\centering
\resizebox{0.7\linewidth}{!}{%
    \begin{tabular}{lrrrrrrrr}
    \toprule
    model & C3 & C4 & C5 & C6 & K4 & K5 & K6 & LCC \\
    \midrule
    GIN & 0.500 & 0.500 & 0.500 & 0.500 & 0.500 & 0.500 & 0.500 & 0.500 \\
    +canon(20) & 0.943 & 0.605 & 0.544 & 0.560 & 0.673 & 0.594 & 0.577 & 0.843 \\
    +$l_{\infty}$(20) & 0.686 & 0.597 & 0.526 & 0.548 & 0.598 & 0.562 & 0.536 & 0.701 \\
    +pos(2) & \textbf{0.968} & \textbf{0.910} & 0.763 & \textbf{0.723} & 0.780 & 0.751 & 0.752 & 0.880 \\
    +pos(2)+rBits(1) & \textbf{0.968} & \textbf{0.910} & \textbf{0.773} & 0.722 & \textbf{0.783} & \textbf{0.755} & \textbf{0.758} & \textbf{0.884} \\
    +rUniform(1) & 0.913 & 0.675 & 0.593 & 0.598 & 0.551 & 0.519 & 0.514 & 0.686 \\
    +rNormal(1) & 0.898 & 0.652 & 0.587 & 0.609 & 0.553 & 0.512 & 0.508 & 0.685 \\
    +rBits(2) & 0.949 & 0.701 & 0.612 & 0.622 & 0.546 & 0.507 & 0.515 & 0.695 \\
    \midrule
    ppgn$^*$ \cite{maron2019ppgn} & 0.707 & 0.742 & 0.667 & 0.623 & 0.629 & 0.651 & 0.706 & 0.645 \\
    dropgnn \cite{papp2021dropgnn} & 0.665 & 0.671 & 0.671 & 0.624 & 0.655 & 0.675 & 0.658 & 0.660 \\
    \bottomrule
    \end{tabular}
}
\caption{AUROC scores on the synthetic tasks using GIN with $5$ layers as the base GNN model. Model name indicates the node feature used to augment the base model, with numbers in brackets indicating the size of the additional node features. All models were run with the same hyperparameter settings (epochs, learning rate, dropout, etc.). $^*$Ppgn had to be adapted for node classification.}
\label{tab:syn_gin_roc}
\end{table*}    

To systematically test the practical expressiveness of GNNs with the suggested node features, we propose a series of binary node classification tasks on regular graphs. The model is tasked with detecting a particular subgraph in the input, and outputting $1$ for all nodes that are in such a subgraph, and $0$ for all nodes that are not in such a subgraph. Since the graphs are regular, the tasks cannot be solved by standard WLGNNs; indeed all nodes look identical to the WL test.

\textbf{Ci\_N:} We generate N regular train and N regular test graphs, with ground truth labels indicating whether a node is in a cycle of length i.
This synthetic task is based on C3\_1000 from \cite{sato2021random}. 
The degree we use for generating these regular graphs is given by the degree that leads to the closest-to-even split of ground truth labels.\footnote{An exception to this is C3\_1000, where we use degree $3$ (instead of $4$) to match the degree from \cite{sato2021random}.}

\textbf{Ki\_N:} We generate N train and N test graphs, with ground truth labels indicating whether a node is in a clique of size i.\looseness=-1

\textbf{LCC\_N:} We generate N train and N test graphs, with ground truth labels indicating how many triangles a node is in. See \cite{sato2021random}.
\footnote{LCC, or local clustering coefficient, of a node in a graph quantifies how close its neighbors are to being a clique. This is given by the number of triangles divided by the number of potential triangles based on the number of neighbors. Here we do not divided by the number of potential triangles, so the labels are just a triangle counts.}

Note that we do not discard any randomly generated graphs based on the number of $0$ and $1$ labels. This ensures that our datasets do not have any prior biases resulting from the generation process. They come from the uniform distribution over all connected regular graphs of given degree with given size. 
Also note that this means that the GIN test accuracy baseline (with no additional node features) is given by the $max$ of the proportion of $0$'s and $1$'s. However the weighted AUROC (Area Under the Receiver Operating Characteristic Curve) baseline for WLGNNs is $0.5$. As such we will generally quote weighted AUROC scores, following \cite{sato2021random}. Please check the appendix for the accuracy scores. 
% By default N=1000, and we omit it from the dataset name.

We ran all our augmentations individually and we combined the top two augmentation (positional embeddings and random bits) by concatenating the features. All GIN based models were run with the same hyperparameter settings. We compare against ppgn \cite{maron2019ppgn} and the recent DropGNN \cite{papp2021dropgnn}, both of them models specifically designed to go beyond the WL test. We adapted ppgn for node classification, as it was implemented as a graph classification model. See Table \ref{tab:syn_gin_roc} for the results. Most notable is that the positional embeddings outperform all the alternatives on all the datasets. We ran the same experiments with GCN \cite{kipf2016gcn} as our base model and got similar results, see appendix.

% \subsection{Ablations}

% \subsubsection{Embedding Dimension}
\subsection{Effect of Embedding Dimension}

We experiment with using larger embedding spaces for the positional embeddings. Additional dimensions allow for better separation of the nodes; this can be particularly advantageous for dense graphs. However, we find that $2$-dimensional embeddings perform the best across the synthetic tasks. See Figure \ref{fig:pos_dimension} for results on C3. 

\begin{figure}[h]
    \centering
    \includegraphics[width=\linewidth]{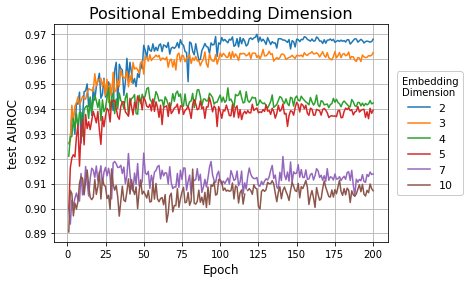}
    \vspace{-0.7cm}
    \caption{Weighted test AUROC scores for different positional embedding dimensions. Results for C3 task.}
    \label{fig:pos_dimension}
\end{figure}

% \subsubsection{Learning Efficiency}
\subsection{Investigating Learning Efficiency}

As noted in our description, we expect GNNs with canonical IDs to have a better learning efficiency, because the nodes of an input graph will always receive the same IDs. This is in contrast to the random augmentations. 
% We also expect consistency in the relative positions of the positional embeddings, but to a lesser extent. 
To confirm this, we plot the training loss per epoch for C3\_1000 in Figure \ref{fig:learning_efficiency_train_loss}. 
We see that the training loss for the model with canonical IDs drops the fastest and levels off the soonest. The model with positional node embeddings levels off next and the random augmentations level off last. See the appendix for a similar plot of the test AUROC.

\begin{figure}[h]
    \centering
    \includegraphics[width=0.9\linewidth]{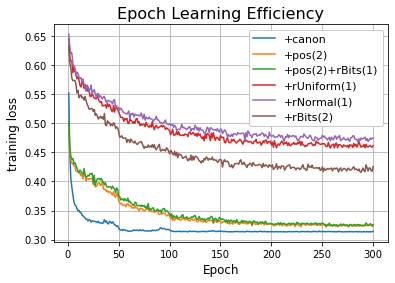}
    \vspace{-0.4cm}
    \caption{Training loss over the first 300 epochs on C3.}
    \label{fig:learning_efficiency_train_loss}
\end{figure}

% \subsubsection{Sample Efficiency}
\subsection{Investigating Sample Efficiency}

The models utilizing positional node embeddings perform significantly better on all the synthetic tasks with the limited training data (N=1000). This suggests that positional node embeddings, although not necessarily consistent between isomorphic graphs, store graph topological data is a somewhat consistent and generalizable way. Since neighboring nodes are placed closer together, it becomes possible to check whether two neighbors are themselves connected based on their positions. This, at least intuitively, might explain why GNNs with positional node embeddings are dramatically better at finding small cycles and cliques in regular graphs.

As we increase the training data available, the GNN with one-hot canonical node labels eventually ends up overtaking even the GNNs with positional node embeddings for the smaller cycle detection tasks, C3 and C4. We show results for C4 in Table \ref{tab:sample_efficiency_C4}. On all other synthetic tasks, the positional node embeddings still outperform the alternatives.

% \begin{figure}[H]
%     \centering
%     \includegraphics[width=\linewidth]{Figures/Table_sample_efficiency_C4.png}
%     \caption{Caption}
%     \label{tab:mytable}
% \end{figure}

\begin{table}[ht]
\centering
\resizebox{\linewidth}{!}{%
\begin{tabular}{lrrrr}
\toprule
             model &  C4\_1000 &  C4\_2000 &  C4\_4000 &  C4\_8000 \\
\midrule
               GIN &    0.500 &    0.500 &    0.500 &    0.500 \\
        +canon(20) &    0.624 &    0.645 &    0.794 &    \textbf{0.970} \\
 +$l_{\infty}$(20) &    0.577 &    0.598 &    0.624 &    0.635 \\
           +pos(2) &    \textbf{0.904} &    \textbf{0.932} &   \textbf{ 0.945} &    0.949 \\
       rUniform(1) &    0.669 &    0.673 &    0.677 &    0.669 \\
        rNormal(1) &    0.669 &    0.665 &    0.666 &    0.667 \\
          rBits(2) &    0.689 &    0.708 &    0.696 &    0.709 \\
\bottomrule
\end{tabular}
}
\caption{Weighted AUROC scores with different training dataset sizes on the C4 cycle detection task.}
\label{tab:sample_efficiency_C4}
\end{table}

% \subsubsection{Erdős-Rényi Graphs}
\subsection{Effect on Other Graph Distributions}

We check that the higher expressivity of our model with positional embeddings does not come at the cost of lower accuracy on other graph distributions, where standard GNNs already perform well. We repeat the synthetic tasks with Erdős-Renyi graphs in place of regular graphs. 
% In the $G(n,p)$ Erdős-Renyi model, a
A graph is constructed by initializing $n$ nodes and connecting every pair of nodes independently with probability $p$. We choose $p$ separately for each task so as to balance the ground truth labels of the dataset. The results can be seen in Figure~\ref{fig:erdos}.

\begin{figure}[ht]
    \centering
    \includegraphics[width=0.9\linewidth]{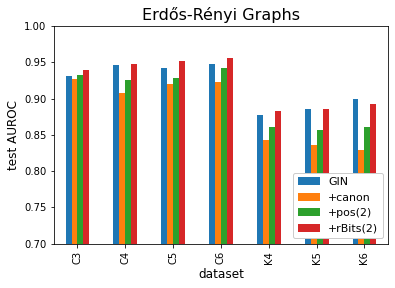}
    \vspace{-0.4cm}
    \caption{Weighted test AUROC on Erdős-Rényi graphs.}
    \label{fig:erdos}
\end{figure}

% \begin{table}[H]
% \centering
% \resizebox{0.8\linewidth}{!}{%
% \begin{tabular}{lrrrr}
% \toprule
%         &       GIN &  +canon  & +pos(2) &  +rBits(2) \\
% \midrule
% C3      &     0.932 &    0.927 & 0.933 &        \textbf{0.939} \\
% C4      &     0.946 &    0.907 & 0.925 &        \textbf{0.948} \\
% C5      &     0.943 &    0.920 & 0.928 &        \textbf{0.951} \\
% C6      &     0.948 &    0.924 & 0.942 &        \textbf{0.956} \\
% K4      &     0.878 &    0.842 & 0.861 &        \textbf{0.883} \\
% K5      & \textbf{0.885} &    0.836 & 0.857 &        \textbf{0.885} \\
% K6      & \textbf{0.900} &    0.829 & 0.861 &        0.892 \\
% \bottomrule
% \end{tabular}
% }
% \caption{Caption}
% \label{tab:erdos}
% \end{table}

We see that in most tasks, the model with random bits achieves the highest test accuracy. However the other models do not perform much worse. In particular, the model using positional embeddings also performs similarly to using no additional node features.\looseness=-1

%TODO: why does GIN (WLGNN) do so well on E-R and UDG graphs? Cannot detect cycles in regular graphs, but it can in most graphs. As soon as it can differentiate nodes by their different neighborhoods, it can also find cycles and cliques!! :)

\subsection{Positional Embeddings vs Ground Truth Positions}
% \subsubsection{Positional Embeddings vs Ground Truth Positions}

To compare the positional embeddings with ground truth positions, we generate unit disk graphs in the two dimensional euclidean plane. We select $20$ points in the unit square uniformly at random and connect nodes that are within a certain threshold distance to give us our graph. The threshold distance is chosen per task so as to keep the classes balanced. We augment the GNN with ground truth positions and compare against using our own positional embeddings. 

\begin{figure}[ht]
    \centering
    \includegraphics[width=0.9\linewidth]{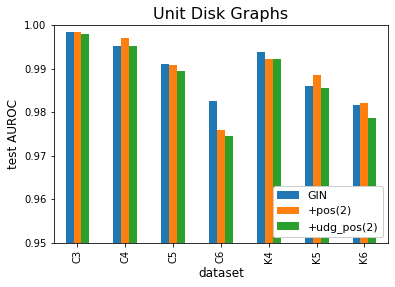}
    \vspace{-0.4cm}
    \caption{Weighted test AUROC on unit disk graphs.}
    \label{fig:udg}
\end{figure}

The results can be seen in Figure \ref{fig:udg}. The positional embeddings perform better than the ground truth positions in almost every task. In several tasks the standard GNN performs the best. This indicates that unit disk graphs can be distinguished by standard GNNs with a high probability.

% \subsection{Larger Graphs}

% \begin{figure}[H]
%     \centering
%     \includegraphics[width=0.9\linewidth]{}
%     \caption{Caption}
%     \label{fig:my_label}
% \end{figure}

\subsection{Benchmark Datasets}

\begin{table*}[ht!]
\centering
\resizebox{\linewidth}{!}{%
\begin{tabular}{lllllllll}
\toprule
dataset &         MUTAG &           PTC &      PROTEINS &          NCI1 &        COLLAB &   IMDBBINARY &     IMDBMULTI &  REDDITBINARY \\
\midrule
size            &  188 &  344 & 1113 & 4110 & 5000 & 1000 & 1500 &  2000 \\
classes         &    2 &    2 &    2 &    2 &    3 &    2 &    3 &     2 \\
avg node count  & 17.9 & 25.5 & 39.1 & 29.8 & 74.4 & 19.7 &   13 & 429.6 \\
\midrule
WL subtree \cite{shervashidze2011weisfeilerWL}                  & 90.4 $\pm$ 5.7  &  59.9 $\pm$ 4.3 &  75.0 $\pm$ 3.1 &  86.0 $\pm$ 1.8 &  78.9 $\pm$ 1.9 &  73.8 $\pm$ 3.9 & 50.9 $\pm$ 3.8  &  81.0 $\pm$ 3.1 \\
Invariant Graph Networks \cite{maron2018invariant_equivariant}  & 83.89$\pm$12.95 &  58.53$\pm$6.86 &  76.58$\pm$5.49 &  74.33$\pm$2.71 &  78.36$\pm$2.47 &  72.0$\pm$5.54  & 48.73$\pm$3.41  &  NA         \\
GIN \cite{xu2018powerful_gin}                                   &  89.4$\pm$5.6   &  64.6$\pm$7.0   &  76.2$\pm$2.8   &  82.7$\pm$1.7   &  80.2$\pm$1.9   &  75.1$\pm$5.1   &  52.3$\pm$2.8   &  92.4 $\pm$ 2.5 \\
1-2-3 GNN \cite{morris2019weisfeiler_WL}                        &  86.1$\pm$      &  60.9$\pm$      &  75.5$\pm$      &  76.2$\pm$      &  NA         &  74.2$\pm$      &  49.5$\pm$      &  NA         \\
ppgn \cite{maron2019ppgn}                                       &  90.55$\pm$8.7  &  66.17$\pm$6.54 &  77.2$\pm$4.73  &  83.19$\pm$1.11 &  80.16$\pm$1.11 &  72.6$\pm$4.9   &  50$\pm$3.15    &  NA         \\
DropGNN \cite{papp2021dropgnn}                                  &  90.4 $\pm$7.0  &  66.3 $\pm$8.6  &  76.3 $\pm$6.1  &  NA         &  NA         &  75.7 $\pm$4.2  &  51.4 $\pm$2.8  &  NA         \\
\midrule
GIN$^*$        &  89.33$\pm$5.6 &  63.65$\pm$11.2 &  73.68$\pm$3.7 &  \textbf{82.24$\pm$1.3} &  \textbf{78.00$\pm$2.1} &  74.90$\pm$2.9 &  51.20$\pm$2.8 &  89.95$\pm$1.6 \\
+canon(20)     &  86.14$\pm$7.2 &  61.89$\pm$4.2  &  70.81$\pm$4.8 &  67.42$\pm$2.2 &  74.22$\pm$1.9 &  69.30$\pm$3.9 &  47.13$\pm$2.9 &  79.25$\pm$4.3 \\
+pos(2)        &  83.51$\pm$5.2 &  61.39$\pm$7.7  &  74.21$\pm$3.6 &  74.40$\pm$2.5 &  74.70$\pm$1.9 &  70.60$\pm$3.0 &  49.60$\pm$3.8 &  89.15$\pm$2.0 \\
+pos(2)+rBits(1) &  86.67$\pm$6.3 &  59.04$\pm$7.3  &  73.76$\pm$3.7 &  74.96$\pm$1.6 &  75.20$\pm$1.3 &  71.50$\pm$2.5 &  50.67$\pm$3.3 &  89.00$\pm$1.6 \\
+rBits(2)       &  \textbf{89.80$\pm$7.9} &  \textbf{65.10$\pm$8.3}  &  \textbf{75.38$\pm$3.8} &  81.73$\pm$2.0 &  76.10$\pm$2.2 &  \textbf{75.30$\pm$3.6} &  \textbf{51.87$\pm$2.3} &  \textbf{90.60$\pm$2.1} \\
\bottomrule
\end{tabular}
}
\caption{Test accuracies and standard deviations on the benchmark tasks. $^*$We include scores for our base GIN model for consistency; these are slightly lower than the official values. We highlight the best augmentation and the best overall model.}
\label{tab:real_benchmark}
\end{table*} 

We evaluate our augmented GIN models on real-world graph classification datasets. We compare against the original GIN architecture and various GNN models that aim to go beyond the WL test. We consider four bioinformatics datasets (MUTAG, PTC, PROTEINS, NCI1) and four social network datasets (COLLAB, IMDB-BINARY, IMDB-MULTI, REDDITBINARY) \cite{yanardag2015real_datasets}.

We follow the experimental setup from the original GIN paper \cite{xu2018powerful_gin}. For the social network datasets we use the node degree as the input feature and for the bioinformatics datasets we use the categorical node features supplied with the graphs.
We report the 10-fold cross-validation accuracies, with mean and standard deviation \cite{yanardag2015real_datasets}. 
We use the original 5-layer GIN model (4 layers + input layer) and we apply the most promising augmentations from our synthetic experiments.

Out of the different augmentations, random bits preforms the best on most of the benchmark datasets, outperforming also the base model without additional node features. The augmentations are not state of the art, but they perform competitively on all the tasks. Given how well the WL subtree baseline performs on the datasets, it is possible that classifying graphs in this dataset rarely requires higher expressiveness. As such there is little added value in our augmentations elevating GIN beyond the WL barrier. This is also the case for the other expressive GNNs.

%TODO: this is also the case with E-R and UDG graphs in our synthetic datasets.

\section{Related Work}

The first works applying neural networks to graphs used recurrent neural networks to learn node representations \cite{gori2005new, scarselli2008graph}. Afterwards many works drew on the success of convolutional neural networks \cite{krizhevsky2012imagenet_convolutions} and generalized this approach to graphs \cite{henaff2015deep, kipf2016gcn, hamilton2017inductive, velivckovic2017graph}. Many of these approaches were later shown to fit into the same general framework of message passing neural networks \cite{gilmer2017quantum_chemistry}. In turn this whole framework was shown to be related to, and indeed upper-bounded by the WL isomorphism test. Although the WL test can distinguish graphs with a very high probability, it has some notable blind spots such as detecting cycles in regular graphs.

In an effort to overcome this boundary, there have been many proposals for increasing the expressive power of GNNs. These include: adding port numbers to edges \cite{sato2019ports_CPNGNN}, adding random features \cite{sato2021random, abboud2020surprising}, randomly dropping nodes or edges \cite{papp2021dropgnn, rong2019dropedge}, adding subgraph counts \cite{bouritsas2020subgraph_counts}, or mimicking higher order WL tests \cite{maron2019ppgn, morris2019weisfeiler_WL}. But many of these either still fail to distinguish simple graphs or are prohibitively expensive. 

Positional encodings also play a major role in NLP. The current state-of-the-art NLP models are based on the permutation invariant attention mechanism \cite{devlin2018bert, lewis2019bart}. But since the relative positions of words is of great importance to meaning, positional embeddings are added to the input tokens to indicate order \cite{shaw2018realtive_pos_encoding_NLP}. 

\section{Conclusion}

We find that positional node embeddings can be very effective for subgraph detection, and can learn to generalize with limited training data. They should be used in applications where subgraph detection is critical, or where the data has inherent spatial meaning.   
Canonical node IDs will be more useful in applications with plenty of data. Random bits can be useful in any scenario and clearly outperform more noisy random features. They can be considered both for increasing expressivity in WL blind spots, and as a form of data augmentation. One could also consider using additional node features only for the ``difficult'' cases. These could be identified by running a WL test before applying a GNN. 

% There is no reason to expect more expressive GNNs to do better on most tasks. Even WLGNNs can detect cycles extremely well in random Erdos-Renyi or unit disk graphs!!! 

% It's not true that WLGNNs cannot detect cycles!! common misconception... They cannot detect cycles in regular graphs.

% We would like to learn positional node embeddings that are learned based on more that just the direct neighbors.
% We would like to extend the results to larger graphs.
% We would like to explore pre-training methods - this way you could pretrain on all 

\newpage

\bibliography{example_paper}

\begin{thebibliography}{42}
\providecommand{\natexlab}[1]{#1}
\providecommand{\url}[1]{\texttt{#1}}
\expandafter\ifx\csname urlstyle\endcsname\relax
  \providecommand{\doi}[1]{doi: #1}\else
  \providecommand{\doi}{doi: \begingroup \urlstyle{rm}\Url}\fi

\bibitem[Abboud et~al.(2020)Abboud, Ceylan, Grohe, and
  Lukasiewicz]{abboud2020surprising}
Abboud, R., Ceylan, I.~I., Grohe, M., and Lukasiewicz, T.
\newblock The surprising power of graph neural networks with random node
  initialization.
\newblock \emph{arXiv preprint arXiv:2010.01179}, 2020.

\bibitem[Babai(2019)]{babai2019canonical_graph}
Babai, L.
\newblock Canonical form for graphs in quasipolynomial time: preliminary
  report.
\newblock In \emph{Proceedings of the 51st Annual ACM SIGACT Symposium on
  Theory of Computing}, pp.\  1237--1246, 2019.

\bibitem[Bostock et~al.(2011)Bostock, Ogievetsky, and Heer]{bostock2011d3}
Bostock, M., Ogievetsky, V., and Heer, J.
\newblock D$^3$ data-driven documents.
\newblock \emph{IEEE transactions on visualization and computer graphics},
  17\penalty0 (12):\penalty0 2301--2309, 2011.

\bibitem[Bouritsas et~al.(2020)Bouritsas, Frasca, Zafeiriou, and
  Bronstein]{bouritsas2020subgraph_counts}
Bouritsas, G., Frasca, F., Zafeiriou, S., and Bronstein, M.~M.
\newblock Improving graph neural network expressivity via subgraph isomorphism
  counting.
\newblock \emph{arXiv preprint arXiv:2006.09252}, 2020.

\bibitem[Coppersmith(2001)]{coppersmith2001infinity}
Coppersmith, D.
\newblock L infinity embeddings.
\newblock In \emph{Approximation, Randomization, and Combinatorial
  Optimization: Algorithms and Techniques}, pp.\  223--228. Springer, 2001.

\bibitem[Devlin et~al.(2018)Devlin, Chang, Lee, and Toutanova]{devlin2018bert}
Devlin, J., Chang, M.-W., Lee, K., and Toutanova, K.
\newblock Bert: Pre-training of deep bidirectional transformers for language
  understanding.
\newblock \emph{arXiv preprint arXiv:1810.04805}, 2018.

\bibitem[Emek et~al.(2014)Emek, Pfister, Seidel, and Wattenhofer]{seidel}
Emek, Y., Pfister, C., Seidel, J., and Wattenhofer, R.
\newblock {Anonymous Networks: Randomization = 2-Hop Coloring}.
\newblock In \emph{{33rd ACM Symposium on Principles of Distributed Computing
  (PODC), Paris, France}}, July 2014.

\bibitem[Fout(2017)]{fout2017protein}
Fout, A.~M.
\newblock \emph{Protein interface prediction using graph convolutional
  networks}.
\newblock PhD thesis, Colorado State University, 2017.

\bibitem[Gilmer et~al.(2017)Gilmer, Schoenholz, Riley, Vinyals, and
  Dahl]{gilmer2017quantum_chemistry}
Gilmer, J., Schoenholz, S.~S., Riley, P.~F., Vinyals, O., and Dahl, G.~E.
\newblock Neural message passing for quantum chemistry.
\newblock In \emph{International conference on machine learning}, pp.\
  1263--1272. PMLR, 2017.

\bibitem[Gori et~al.(2005)Gori, Monfardini, and Scarselli]{gori2005new}
Gori, M., Monfardini, G., and Scarselli, F.
\newblock A new model for learning in graph domains.
\newblock In \emph{Proceedings. 2005 IEEE International Joint Conference on
  Neural Networks, 2005.}, volume~2, pp.\  729--734. IEEE, 2005.

\bibitem[Hamilton et~al.(2017)Hamilton, Ying, and
  Leskovec]{hamilton2017inductive}
Hamilton, W.~L., Ying, R., and Leskovec, J.
\newblock Inductive representation learning on large graphs.
\newblock In \emph{Proceedings of the 31st International Conference on Neural
  Information Processing Systems}, pp.\  1025--1035, 2017.

\bibitem[Helfgott et~al.(2017)Helfgott, Bajpai, and
  Dona]{helfgott2017graphisomorphism}
Helfgott, H.~A., Bajpai, J., and Dona, D.
\newblock Graph isomorphisms in quasi-polynomial time.
\newblock \emph{arXiv preprint arXiv:1710.04574}, 2017.

\bibitem[Henaff et~al.(2015)Henaff, Bruna, and LeCun]{henaff2015deep}
Henaff, M., Bruna, J., and LeCun, Y.
\newblock Deep convolutional networks on graph-structured data.
\newblock \emph{arXiv preprint arXiv:1506.05163}, 2015.

\bibitem[Huang et~al.(2007)Huang, Lai, Sajeev, and Gao]{huang2007non-overlap}
Huang, X., Lai, W., Sajeev, A., and Gao, J.
\newblock A new algorithm for removing node overlapping in graph visualization.
\newblock \emph{Information Sciences}, 177\penalty0 (14):\penalty0 2821--2844,
  2007.

\bibitem[Irwin et~al.(2012)Irwin, Sterling, Mysinger, Bolstad, and
  Coleman]{irwin2012zinc}
Irwin, J.~J., Sterling, T., Mysinger, M.~M., Bolstad, E.~S., and Coleman, R.~G.
\newblock Zinc: a free tool to discover chemistry for biology.
\newblock \emph{Journal of chemical information and modeling}, 52\penalty0
  (7):\penalty0 1757--1768, 2012.

\bibitem[Junttila \& Kaski(2007)Junttila and Kaski]{junttila2007bliss}
Junttila, T. and Kaski, P.
\newblock Engineering an efficient canonical labeling tool for large and sparse
  graphs.
\newblock In \emph{2007 Proceedings of the Ninth Workshop on Algorithm
  Engineering and Experiments (ALENEX)}, pp.\  135--149. SIAM, 2007.

\bibitem[Kamada et~al.(1989)Kamada, Kawai, et~al.]{kamada1989algorithm}
Kamada, T., Kawai, S., et~al.
\newblock An algorithm for drawing general undirected graphs.
\newblock \emph{Information processing letters}, 31\penalty0 (1):\penalty0
  7--15, 1989.

\bibitem[Kipf \& Welling(2016)Kipf and Welling]{kipf2016gcn}
Kipf, T.~N. and Welling, M.
\newblock Semi-supervised classification with graph convolutional networks.
\newblock \emph{arXiv preprint arXiv:1609.02907}, 2016.

\bibitem[Klemmer et~al.(2021)Klemmer, Safir, and Neill]{klemmer2021positional}
Klemmer, K., Safir, N., and Neill, D.~B.
\newblock Positional encoder graph neural networks for geographic data.
\newblock \emph{arXiv preprint arXiv:2111.10144}, 2021.

\bibitem[Krizhevsky et~al.(2012)Krizhevsky, Sutskever, and
  Hinton]{krizhevsky2012imagenet_convolutions}
Krizhevsky, A., Sutskever, I., and Hinton, G.~E.
\newblock Imagenet classification with deep convolutional neural networks.
\newblock \emph{Advances in neural information processing systems},
  25:\penalty0 1097--1105, 2012.

\bibitem[Lewis et~al.(2019)Lewis, Liu, Goyal, Ghazvininejad, Mohamed, Levy,
  Stoyanov, and Zettlemoyer]{lewis2019bart}
Lewis, M., Liu, Y., Goyal, N., Ghazvininejad, M., Mohamed, A., Levy, O.,
  Stoyanov, V., and Zettlemoyer, L.
\newblock Bart: Denoising sequence-to-sequence pre-training for natural
  language generation, translation, and comprehension.
\newblock \emph{arXiv preprint arXiv:1910.13461}, 2019.

\bibitem[Li et~al.(2020)Li, Wang, Wang, and Leskovec]{li2020distance_encoding}
Li, P., Wang, Y., Wang, H., and Leskovec, J.
\newblock Distance encoding: Design provably more powerful neural networks for
  graph representation learning.
\newblock \emph{arXiv preprint arXiv:2009.00142}, 2020.

\bibitem[Loukas(2019)]{loukas2019graph_node_ids_proof}
Loukas, A.
\newblock What graph neural networks cannot learn: depth vs width.
\newblock \emph{arXiv preprint arXiv:1907.03199}, 2019.

\bibitem[Ma et~al.(2021)Ma, Rabbany, and Romero-Soriano]{ma2021gat_with_pos}
Ma, L., Rabbany, R., and Romero-Soriano, A.
\newblock Graph attention networks with positional embeddings.
\newblock In \emph{PAKDD (1)}, pp.\  514--527. Springer, 2021.

\bibitem[Maron et~al.(2018)Maron, Ben-Hamu, Shamir, and
  Lipman]{maron2018invariant_equivariant}
Maron, H., Ben-Hamu, H., Shamir, N., and Lipman, Y.
\newblock Invariant and equivariant graph networks.
\newblock \emph{arXiv preprint arXiv:1812.09902}, 2018.

\bibitem[Maron et~al.(2019)Maron, Ben-Hamu, Serviansky, and
  Lipman]{maron2019ppgn}
Maron, H., Ben-Hamu, H., Serviansky, H., and Lipman, Y.
\newblock Provably powerful graph networks.
\newblock \emph{arXiv preprint arXiv:1905.11136}, 2019.

\bibitem[Marriott et~al.(2003)Marriott, Stuckey, Tam, and
  He]{marriott2003removing-overlap}
Marriott, K., Stuckey, P., Tam, V., and He, W.
\newblock Removing node overlapping in graph layout using constrained
  optimization.
\newblock \emph{Constraints}, 8\penalty0 (2):\penalty0 143--171, 2003.

\bibitem[McKay \& Piperno(2014)McKay and Piperno]{mckay2014nauty}
McKay, B.~D. and Piperno, A.
\newblock Practical graph isomorphism, ii.
\newblock \emph{Journal of symbolic computation}, 60:\penalty0 94--112, 2014.

\bibitem[Morris et~al.(2019)Morris, Ritzert, Fey, Hamilton, Lenssen, Rattan,
  and Grohe]{morris2019weisfeiler_WL}
Morris, C., Ritzert, M., Fey, M., Hamilton, W.~L., Lenssen, J.~E., Rattan, G.,
  and Grohe, M.
\newblock Weisfeiler and leman go neural: Higher-order graph neural networks.
\newblock In \emph{Proceedings of the AAAI Conference on Artificial
  Intelligence}, volume~33, pp.\  4602--4609, 2019.

\bibitem[Papp et~al.(2021)Papp, Martinkus, Faber, and
  Wattenhofer]{papp2021dropgnn}
Papp, P.~A., Martinkus, K., Faber, L., and Wattenhofer, R.
\newblock {DropGNN: Random Dropouts Increase the Expressiveness of Graph Neural
  Networks}.
\newblock \emph{Advances in Neural Information Processing Systems}, 34, 2021.

\bibitem[Rong et~al.(2019)Rong, Huang, Xu, and Huang]{rong2019dropedge}
Rong, Y., Huang, W., Xu, T., and Huang, J.
\newblock Dropedge: Towards deep graph convolutional networks on node
  classification.
\newblock \emph{arXiv preprint arXiv:1907.10903}, 2019.

\bibitem[Sanchez-Gonzalez et~al.(2020)Sanchez-Gonzalez, Godwin, Pfaff, Ying,
  Leskovec, and Battaglia]{sanchez2020physics}
Sanchez-Gonzalez, A., Godwin, J., Pfaff, T., Ying, R., Leskovec, J., and
  Battaglia, P.
\newblock Learning to simulate complex physics with graph networks.
\newblock In \emph{International Conference on Machine Learning}, pp.\
  8459--8468. PMLR, 2020.

\bibitem[Sato et~al.(2019)Sato, Yamada, and Kashima]{sato2019ports_CPNGNN}
Sato, R., Yamada, M., and Kashima, H.
\newblock Approximation ratios of graph neural networks for combinatorial
  problems.
\newblock \emph{arXiv preprint arXiv:1905.10261}, 2019.

\bibitem[Sato et~al.(2021)Sato, Yamada, and Kashima]{sato2021random}
Sato, R., Yamada, M., and Kashima, H.
\newblock Random features strengthen graph neural networks.
\newblock In \emph{Proceedings of the 2021 SIAM International Conference on
  Data Mining (SDM)}, pp.\  333--341. SIAM, 2021.

\bibitem[Scarselli et~al.(2008)Scarselli, Gori, Tsoi, Hagenbuchner, and
  Monfardini]{scarselli2008graph}
Scarselli, F., Gori, M., Tsoi, A.~C., Hagenbuchner, M., and Monfardini, G.
\newblock The graph neural network model.
\newblock \emph{IEEE transactions on neural networks}, 20\penalty0
  (1):\penalty0 61--80, 2008.

\bibitem[Shaw et~al.(2018)Shaw, Uszkoreit, and
  Vaswani]{shaw2018realtive_pos_encoding_NLP}
Shaw, P., Uszkoreit, J., and Vaswani, A.
\newblock Self-attention with relative position representations.
\newblock \emph{arXiv preprint arXiv:1803.02155}, 2018.

\bibitem[Shervashidze et~al.(2011)Shervashidze, Schweitzer, Van~Leeuwen,
  Mehlhorn, and Borgwardt]{shervashidze2011weisfeilerWL}
Shervashidze, N., Schweitzer, P., Van~Leeuwen, E.~J., Mehlhorn, K., and
  Borgwardt, K.~M.
\newblock Weisfeiler-lehman graph kernels.
\newblock \emph{Journal of Machine Learning Research}, 12\penalty0 (9), 2011.

\bibitem[Veli{\v{c}}kovi{\'c} et~al.(2017)Veli{\v{c}}kovi{\'c}, Cucurull,
  Casanova, Romero, Lio, and Bengio]{velivckovic2017graph}
Veli{\v{c}}kovi{\'c}, P., Cucurull, G., Casanova, A., Romero, A., Lio, P., and
  Bengio, Y.
\newblock Graph attention networks.
\newblock \emph{arXiv preprint arXiv:1710.10903}, 2017.

\bibitem[Weisfeiler \& Leman(1968)Weisfeiler and Leman]{weisfeiler1968WL_test}
Weisfeiler, B. and Leman, A.
\newblock The reduction of a graph to canonical form and the algebra which
  appears therein.
\newblock \emph{NTI, Series}, 2\penalty0 (9):\penalty0 12--16, 1968.

\bibitem[Xu et~al.(2018)Xu, Hu, Leskovec, and Jegelka]{xu2018powerful_gin}
Xu, K., Hu, W., Leskovec, J., and Jegelka, S.
\newblock How powerful are graph neural networks?
\newblock \emph{arXiv preprint arXiv:1810.00826}, 2018.

\bibitem[Yanardag \& Vishwanathan(2015)Yanardag and
  Vishwanathan]{yanardag2015real_datasets}
Yanardag, P. and Vishwanathan, S.
\newblock Deep graph kernels.
\newblock In \emph{Proceedings of the 21th ACM SIGKDD international conference
  on knowledge discovery and data mining}, pp.\  1365--1374, 2015.

\bibitem[Ying et~al.(2018)Ying, He, Chen, Eksombatchai, Hamilton, and
  Leskovec]{ying2018recommender_systems}
Ying, R., He, R., Chen, K., Eksombatchai, P., Hamilton, W.~L., and Leskovec, J.
\newblock Graph convolutional neural networks for web-scale recommender
  systems.
\newblock In \emph{Proceedings of the 24th ACM SIGKDD International Conference
  on Knowledge Discovery \& Data Mining}, pp.\  974--983, 2018.

\end{thebibliography}
\bibliographystyle{icml2022}

%%%%%%%%%%%%%%%%%%%%%%%%%%%%%%%%%%%%%%%%%%%%%%%%%%%%%%%%%%%%%%%%%%%%%%%%%%%%%%%
%%%%%%%%%%%%%%%%%%%%%%%%%%%%%%%%%%%%%%%%%%%%%%%%%%%%%%%%%%%%%%%%%%%%%%%%%%%%%%%
% APPENDIX
%%%%%%%%%%%%%%%%%%%%%%%%%%%%%%%%%%%%%%%%%%%%%%%%%%%%%%%%%%%%%%%%%%%%%%%%%%%%%%%
%%%%%%%%%%%%%%%%%%%%%%%%%%%%%%%%%%%%%%%%%%%%%%%%%%%%%%%%%%%%%%%%%%%%%%%%%%%%%%%
\newpage
\appendix
\onecolumn
\section{Proof of Theorem \ref{thm:GNN_canon}}
\begin{theorem}
\label{thm:GNN_canon_b}
GINs of sufficient depth augmented with canonical node IDs can distinguish any two non-isomorphic graphs.
\end{theorem}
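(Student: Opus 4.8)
The plan is to reduce the statement to the cited result (Corollary 3.1 in \cite{loukas2019graph_node_ids_proof}), which says that message-passing GNNs equipped with unique node IDs can compute any computable function of the graph, in particular any isomorphism-invariant function. The only gap to fill is that ``canonical node IDs'' are a special case of ``unique node IDs'': a canonical labelling assigns each node of an $n$-vertex graph a distinct label from $\{1,\dots,n\}$, so within a single graph all the IDs are distinct, and the hypothesis of the distributed-computing result is met.

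First I would make precise what it means for a GIN to ``distinguish'' two non-isomorphic graphs $G_1, G_2$: there is a choice of (finitely many) parameters so that the multiset of final node embeddings — equivalently, after a \textsc{readout}, the graph-level output — differs on $G_1$ and $G_2$. Next I would invoke the universality/ID result: a GNN with unique IDs of sufficient depth (depth $\ge D$, the diameter, suffices, since then every node can reconstruct the whole labelled topology) can, in principle, have each node output an encoding of the entire graph together with its canonical IDs; GIN in particular is expressive enough to realize the needed injective aggregation and update functions (this is exactly the property that makes GIN as powerful as WL, extended here to the labelled setting), so by universal approximation of the component MLPs one obtains the required parameter setting. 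Then the key step: because the IDs are \emph{canonical}, the encoding that each node computes is itself isomorphism-invariant — two isomorphic graphs receive the same canonical labelling up to the graph automorphism, hence produce the same multiset of embeddings, while two non-isomorphic graphs necessarily produce different canonical forms and therefore different embeddings. This gives both soundness (isomorphic graphs are not separated) and completeness (non-isomorphic graphs are separated), i.e. the claimed universality.

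I would then close by noting the one subtlety worth a sentence: the output of a canonical-labelling algorithm such as \textsc{nauty} is only canonical up to the input graph's automorphism group, but this is exactly what we want — it means the augmented GIN is still permutation-invariant as a function of the abstract graph, and the argument above is unaffected.

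The main obstacle is not conceptual but one of bookkeeping: carefully citing and transporting the distributed-computing universality statement of \cite{loukas2019graph_node_ids_proof} into the GIN/\textsc{readout} language used here, including checking that ``sufficient depth'' can be taken to be the diameter and that GIN's aggregate/update layers can approximate the injective functions the construction needs. Since the paper explicitly says ``the proof follows from Corollary 3.1 in \cite{loukas2019graph_node_ids_proof}'' and defers it to the appendix, I expect the appendix proof to be short: state the corollary, observe canonical IDs are unique IDs, and conclude.
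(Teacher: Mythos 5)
Your proposal is correct and matches the paper's own proof almost exactly: the appendix proof simply states Corollary 3.1 of \cite{loukas2019graph_node_ids_proof}, observes that GIN with appropriate width and depth satisfies its hypotheses and that canonical IDs make every node uniquely identified, and concludes that the GNN can then compute any Turing computable function, in particular graph isomorphism. The extra soundness discussion you include (isomorphic graphs receiving the same multiset of embeddings thanks to canonicality) is a reasonable bonus but is not needed for, and does not appear in, the paper's argument.
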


The proof follows from Corollary 3.1 in \cite{loukas2019graph_node_ids_proof}. First we state the corollary again for convenience:

\textbf{Corollary 3.1.} $GNN^{n}_{mp}$ can compute any Turing computable function over connected attributed graphs if the following conditions are jointly met: each node is uniquely identified; \textsc(MSG) and \textsc{UP} are Turing-complete for every layer; the depth is at least $d \geq D$ layers; and the width is unbounded.

$D$ denotes the diameter of the graph. $GNN^{n}_{mp}$ refers to a general message passing GNN. \textsc(MSG) and \textsc{UP} are alternative characterizations of the \textsc{aggregate} and \textsc{update} steps. The equivalence of the two models is shown in the paper. And GIN (with sufficient width and depth) is used an example of a network that satisfies the conditions of the theorem.

\begin{proof}
First note that if we can compute any Turing computable function over connected attributed graphs, then in particular we can also solve the graph isomorphism problem. So what remains to be shown is that GIN augmented with canonical node IDs satisfies the condition of the corollary. We already know that GIN with appropriate hyperparameters satisfies the conditions on the GNN. And finally, if we assign a canonical ID to each node, then each node is uniquely identified.
\end{proof}

\section{Proof of Theorem \ref{thm:l_inf}}
\begin{theorem}
\label{thm:l_inf_b}
Every graph $G=(V,E)$ embeds isometrically into $l_{\infty}$.
\end{theorem}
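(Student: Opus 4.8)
The plan is to use the classical Fréchet-type embedding: map each vertex to the vector of its shortest-path distances to all vertices. Concretely, fix an enumeration $V = \{v_1, \dots, v_n\}$ and define $\phi : V \to \mathbb{R}^n$ by
$$ \phi(v) = \bigl( d(v, v_1),\, d(v, v_2),\, \dots,\, d(v, v_n) \bigr), $$
where $d(\cdot,\cdot)$ denotes the shortest-path distance in $G$ (finite since $G$ is connected). I would then equip $\mathbb{R}^n$ with the $l_\infty$ norm and show $\norm{\phi(u) - \phi(v)}_\infty = d(u,v)$ for all $u, v \in V$.

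The upper bound $\norm{\phi(u) - \phi(v)}_\infty \le d(u,v)$ follows from the triangle inequality applied coordinatewise: for every $i$ we have $d(u, v_i) \le d(u,v) + d(v, v_i)$ and symmetrically, hence $\abs{d(u, v_i) - d(v, v_i)} \le d(u,v)$; taking the maximum over $i$ gives the claim. For the matching lower bound, I would single out the coordinate $i$ with $v_i = u$: there $\abs{d(u, v_i) - d(v, v_i)} = \abs{0 - d(u,v)} = d(u,v)$, so the $l_\infty$ distance is at least $d(u,v)$. Combining the two inequalities yields the isometry.

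There is no real obstacle here; the only point requiring a moment's care is the lower bound, where one must exhibit a specific coordinate realizing the distance (using the coordinate indexed by one of the two endpoints). I would also remark — consistently with the discussion following the statement — that this construction uses $n$ dimensions, which is unavoidable in general, but that observation is not needed for the proof itself.
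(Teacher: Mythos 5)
Your proposal is correct and is essentially identical to the paper's own proof: both use the Fréchet embedding sending each vertex to its vector of shortest-path distances to all $n$ vertices, establish the upper bound via the triangle inequality coordinatewise, and realize the lower bound at the coordinate indexed by one of the two endpoints. No differences worth noting.
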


\begin{proof}
Let $V=\{v_1, v_2, \ldots v_n\}$. We define $f_i(v_j) = d(v_i, v_j)$, i.e., as the distance between nodes $v_i$ and $v_j$ in $G$, and we define the mapping of the nodes into $l_{\infty}$ as:
\begin{equation*}
f(v_j) = 
\begin{pmatrix}
f_1(v_j) \\
\ldots \\
f_n(v_j)
\end{pmatrix}
\end{equation*}

Then by the triangle inequality, we have for any $k \in 1,2,\ldots,n$
\begin{equation*}
    \abs{f_k(v_i) - f_k(v_j)} = \abs{d(v_i, v_k) - d(v_k, v_j)} \leq d(v_i, v_j).
\end{equation*}
Therefore,
\begin{equation*}
    \norm{f(v_i) - f(v_j)}_{\infty} \leq d(v_i, v_j).
\end{equation*}
And by construction, considering $k=i$ gives
\begin{equation*}
    \norm{f(v_i) - f(v_j)}_{\infty} = \max_k \abs{f_k(v_i) - f_k(v_j)} \geq \abs{f_i(v_i) - f_i(v_j)} = f_i(v_j) = d(v_i, v_j).
\end{equation*}
Combining these we have 
\begin{equation*}
    \norm{f(v_i) - f(v_j)}_{\infty} = d(v_i, v_j).
\end{equation*}
\end{proof}

\section{Proof of Theorem \ref{thm:GNN_l_inf_universal}}
\begin{theorem}
\label{thm:GNN_l_inf_universal_b}
GIN of sufficient depth augmented with isometric $l_{\infty}$-embeddings can distinguish any two non-isomorphic graphs.\looseness=-1
\end{theorem}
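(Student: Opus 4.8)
The plan is to mirror the proof of Theorem~\ref{thm:GNN_canon}: reduce the statement to Corollary~3.1 of \cite{loukas2019graph_node_ids_proof}, which guarantees that a sufficiently wide and deep message-passing GNN (and GIN in particular, as noted there) can compute any Turing-computable function over connected attributed graphs provided that, among the other hypotheses, \emph{each node is uniquely identified} by its input feature. Since deciding graph isomorphism is Turing-computable, once this condition is in place the augmented GIN can in particular solve graph isomorphism and hence distinguish any two non-isomorphic graphs. So the only thing that genuinely needs checking is that the isometric $l_\infty$-embedding constructed in the proof of Theorem~\ref{thm:l_inf} assigns every node of a given graph a distinct feature vector.

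To see this, recall the embedding is $f(v_j) = (d(v_1,v_j), \ldots, d(v_n,v_j))^\top$. The $j$-th coordinate of $f(v_j)$ equals $d(v_j,v_j) = 0$, whereas for any $i \neq j$ the $j$-th coordinate of $f(v_i)$ equals $d(v_i,v_j) \geq 1$, since the graph is simple and connected. Hence $f(v_i) \neq f(v_j)$ whenever $i \neq j$, so $f$ is injective on $V$ and each node is uniquely identified. The remaining hypotheses of Corollary~3.1 — \textsc{message} and \textsc{update} Turing-complete at every layer, depth at least the diameter $D$, unbounded width — are met by GIN with appropriate hyperparameters, exactly as invoked in the proof of Theorem~\ref{thm:GNN_canon}.

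The step that deserves the most care, and the only place this argument differs in spirit from the canonical-ID case, is arguing that the lack of permutation-consistency of the $l_\infty$ labels across isomorphic copies of a graph is irrelevant here. It is: universality in this statement concerns only \emph{distinguishing non-isomorphic graphs}, for which injectivity of the labels within each individual input graph already suffices (the network reconstructs each graph's topology up to isomorphism from its uniquely labelled nodes and then compares the two). Permutation-consistency would matter for generalization, not for this expressivity claim, so no extra work is needed on that front. It is also worth stating explicitly that the ``$0$ on the diagonal'' observation above is precisely what rules out the degenerate situation — flagged earlier in the paper for Euclidean \textsc{stress} embeddings — in which two distinct nodes could receive identical coordinates; for the $l_\infty$ embedding this cannot happen.
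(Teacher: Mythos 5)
Your proposal is correct and follows essentially the same route as the paper: reduce to Corollary~3.1 of \cite{loukas2019graph_node_ids_proof} exactly as in Theorem~\ref{thm:GNN_canon}, and verify that the isometric $l_\infty$-embedding assigns distinct vectors to distinct nodes. The paper derives injectivity from the isometry itself ($\norm{f(v_i)-f(v_j)}_\infty = d(v_i,v_j) \geq 1$ for $i \neq j$), whereas you read it off the $j$-th coordinate directly; these are the same one-line observation, and your additional remarks on permutation-consistency are accurate but not needed for the expressivity claim.
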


\begin{proof}
The proof follows the same reasoning as the proof of Theorem \ref{thm:GNN_canon_b} above. What remains to be shown is that the $l_{\infty}$-embeddings we defined in the proof of Theorem \ref{thm:l_inf_b} are unique.
However, this is trivial by construction, since for all $v_i, v_j \in V$, we have 
\begin{equation*}
    \norm{f(v_i) - f(v_j)}_{\infty} = d(v_i, v_j) \geq 0.
\end{equation*}
This of course holds for all isometric embedding, since the distance between any two nodes in an unweighted graph is always at least $1$.
\end{proof}

\section{Synthetic Datasets}

Here are the complete results for the synthetic datasets using GIN as the base GNN architecture. We include both test AUROC and test accuracy tables. Scores are always from the final epoch.

\begin{figure}[H]
    \centering
    \includegraphics[width=0.8\linewidth]{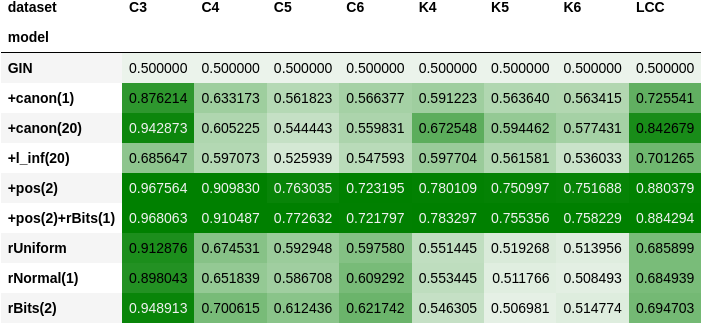}
    \caption{AUROC scores on the synthetic tasks using GIN with 5 layers as the base GNN model.}
    \label{fig:gin_syn_roc_test_full}
\end{figure}

\begin{figure}[H]
    \centering
    \includegraphics[width=0.8\linewidth]{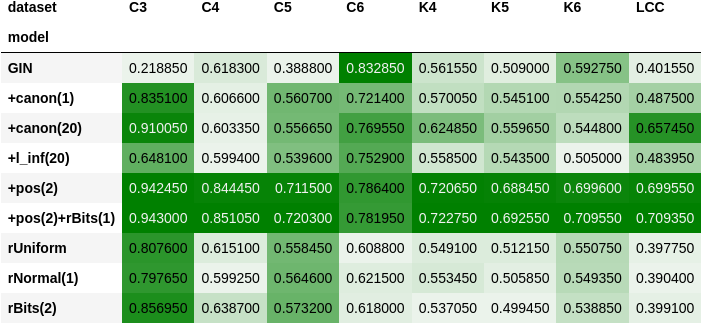}
    \caption{Test accuracies on the synthetic tasks using GIN with 5 layers as the base GNN model.}
    \label{fig:gin_syn_acc_test_full}
\end{figure}

Here are the complete results for the synthetic datasets using GCN as the base GNN architecture. We include both test AUROC and test accuracy tables. Scores are always from the final epoch.

\begin{figure}[H]
    \centering
    \includegraphics[width=0.8\linewidth]{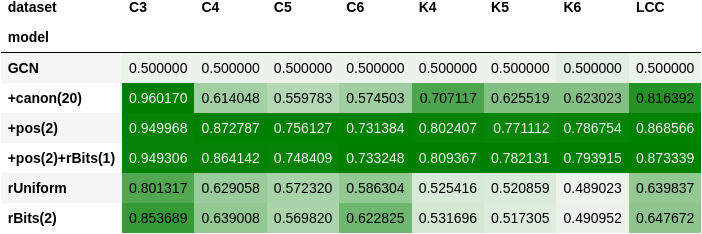}
    \caption{AUROC scores on the synthetic tasks using GCN with 5 layers as the base GNN model.}
    \label{fig:gcn_syn_roc_test_full}
\end{figure}

\begin{figure}[H]
    \centering
    \includegraphics[width=0.8\linewidth]{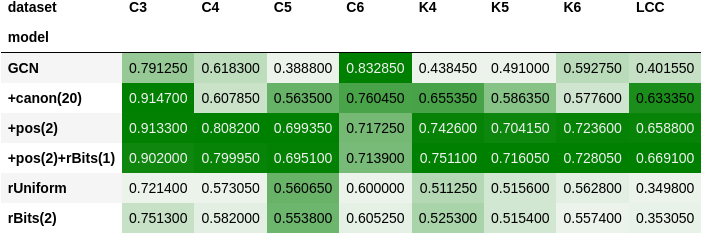}
    \caption{Test accuracies on the synthetic tasks using GCN with 5 layers as the base GNN model.}
    \label{fig:gcn_syn_acc_test_full}
\end{figure}

\section{Further Experiments}

\subsection{Canon has better learning efficiency}

In terms of the test AUROC, canon and pos models level out at around the same time, at epoch $100$, but the pos model at a higher value. The models with the random augmentations level off much later, after epoch $250$.

\begin{figure}[H]
    \centering
    \includegraphics[width=0.6\linewidth]{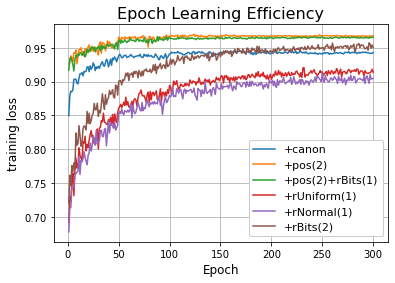}
    \caption{Weighted test AUROC over the first 300 epochs on C3.}
    \label{fig:learning_efficiency_roc_test}
\end{figure}

%%%%%%%%%%%%%%%%%%%%%%%%%%%%%%%%%%%%%%%%%%%%%%%%%%%%%%%%%%%%%%%%%%%%%%%%%%%%%%%
%%%%%%%%%%%%%%%%%%%%%%%%%%%%%%%%%%%%%%%%%%%%%%%%%%%%%%%%%%%%%%%%%%%%%%%%%%%%%%%

\end{document}